\newif\ifarxiv
\numberwithin{equation}{section}
\declaretheorem[numberlike=equation]{theorem}
\declaretheorem[numberlike=theorem]{proposition}
\declaretheoremstyle[qed={\ensuremath\Diamond}]{remstyle}
\def\R{\mathbb R}
\def\1{\mathds 1}
\def\eps{\epsilon}
\def\hnu{\hat \nu}
\def\hpsi{\hat\psi}
\DeclareMathOperator*{\argmin}{arg\,min}
\crefname{algocf}{alg.}{algs.}
\Crefname{algocf}{Algorithm}{Algorithms}
\def\ddefloop#1{\ifx\ddefloop#1\else\ddef{#1}\expandafter\ddefloop\fi}
\def\ddef#1{\expandafter\def\csname bb#1\endcsname{\ensuremath{\mathbb{#1}}}}
\def\ddef#1{\expandafter\def\csname c#1\endcsname{\ensuremath{\mathcal{#1}}}}
\def\ddef#1{\expandafter\def\csname h#1\endcsname{\ensuremath{\hat{#1}}}}
\def\hnu{\hat\nu}
\title{A Gradual, Semi-Discrete Approach to Generative Network Training
	via Explicit Wasserstein Minimization}
\author[1]{Yucheng Chen}
\author[1]{Matus Telgarsky}
\author[1]{Chao Zhang}
\author[1]{Bolton Bailey}
\author[2]{\\Daniel Hsu}
\author[1]{Jian Peng}
\affil[1]{University of Illinois at Urbana-Champaign, Urbana, IL}
\affil[2]{Columbia University, New York, NY}
\date{}
\begin{document}
\maketitle

\begin{abstract}
This paper provides a simple procedure to fit generative networks to target distributions,
with the goal of a small Wasserstein distance (or other optimal transport costs).
The approach is based on two principles:
(a) if the source randomness of the network is a continuous distribution (the ``semi-discrete'' setting),
then the Wasserstein distance is realized by a deterministic optimal transport mapping;
(b) given an optimal transport mapping between a generator network and a target distribution,
the Wasserstein distance may be decreased via a regression between the generated data
and the mapped target points.
The procedure here therefore alternates these two steps, forming an optimal transport and regressing against
it, gradually adjusting the generator network towards the target distribution.
Mathematically, this approach is shown to minimize the Wasserstein distance to both the empirical
target distribution, and also its underlying population counterpart.
Empirically, good performance is demonstrated on the training and testing sets of the MNIST and Thin-8 data.
The paper closes with a discussion of the unsuitability of the Wasserstein distance for certain
tasks, as has been identified in prior work \citep{generalization_equilibrium, thin8}.
\end{abstract}

\section{Introduction}

A generative network $g$ models a distribution by first sampling $x\sim\mu$ from
some simple distribution $\mu$ (e.g., a multivariate Gaussian),
and thereafter outputting $g(x)$; this sampling procedure defines a \emph{pushforward distribution} $g\#\mu$.
A common training procedure to fit $g\#\mu$
to a target distribution $\hnu$ is
to minimize a divergence $\cD(g\#\mu,\hnu)$ over a collection of parameters
defining $g$.

The original algorithms for this framework,
named \emph{generative adversarial networks},
alternatively optimized both the generator network $g$,
as well as a 
a second \emph{discriminator} of \emph{adversarial} network:
first the discriminator was fixed and the generator was optimized to fool it,
and second the generator was fixed and the discriminator was optimized to distinguish
it from $\hnu$.
This procedure was originally constructed to minimize a Jensen-Shannon Divergence
via a game-theoretic derivation \citep{gan}.
Subsequent work derived the adversarial relationship in other ways, for instance
the Wasserstein GAN used duality properties of the Wasserstein distance \citep{WGAN}.

\begin{figure}[t!]
	\centering
\ifarxiv
            \begin{adjustbox}{width=0.7\textwidth}
          \else
          \fi
	\begin{tikzpicture}
	\tikzset{node style/.style={midway,above,sloped}};
	\node[inner sep=0pt] (f1) at (0,0)
	{\includegraphics[width=.09\textwidth]{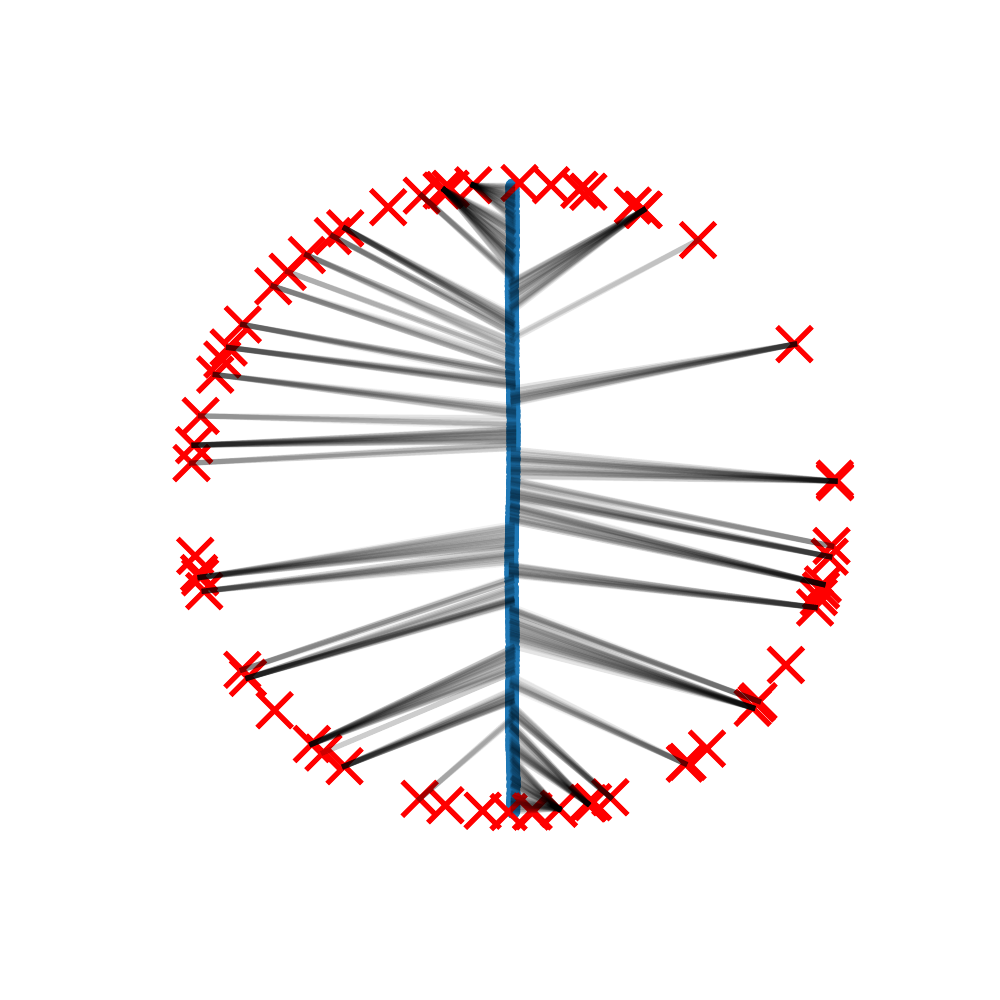}};
	\node[inner sep=0pt] (f2) at (2.1,0)
	{\includegraphics[width=.09\textwidth]{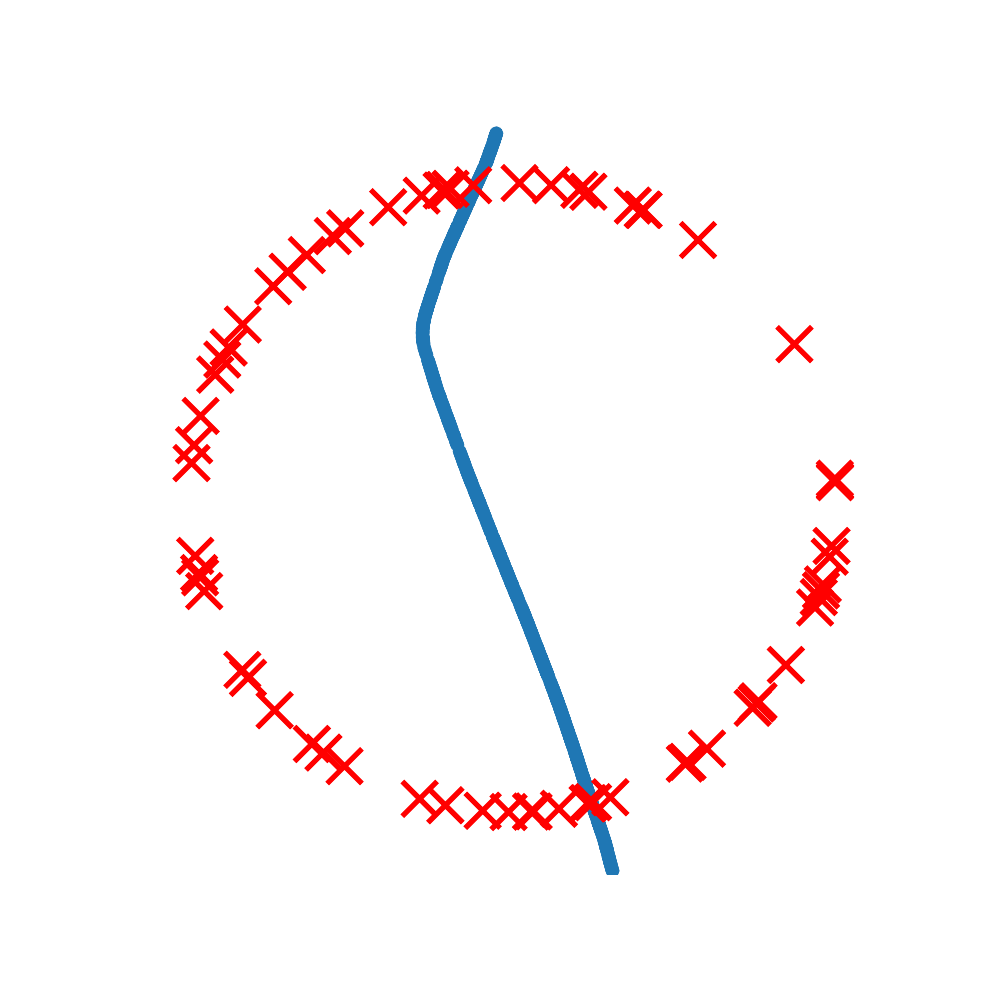}};
	\node[inner sep=0pt] (f3) at (0,-2.1)
	{\includegraphics[width=.09\textwidth]{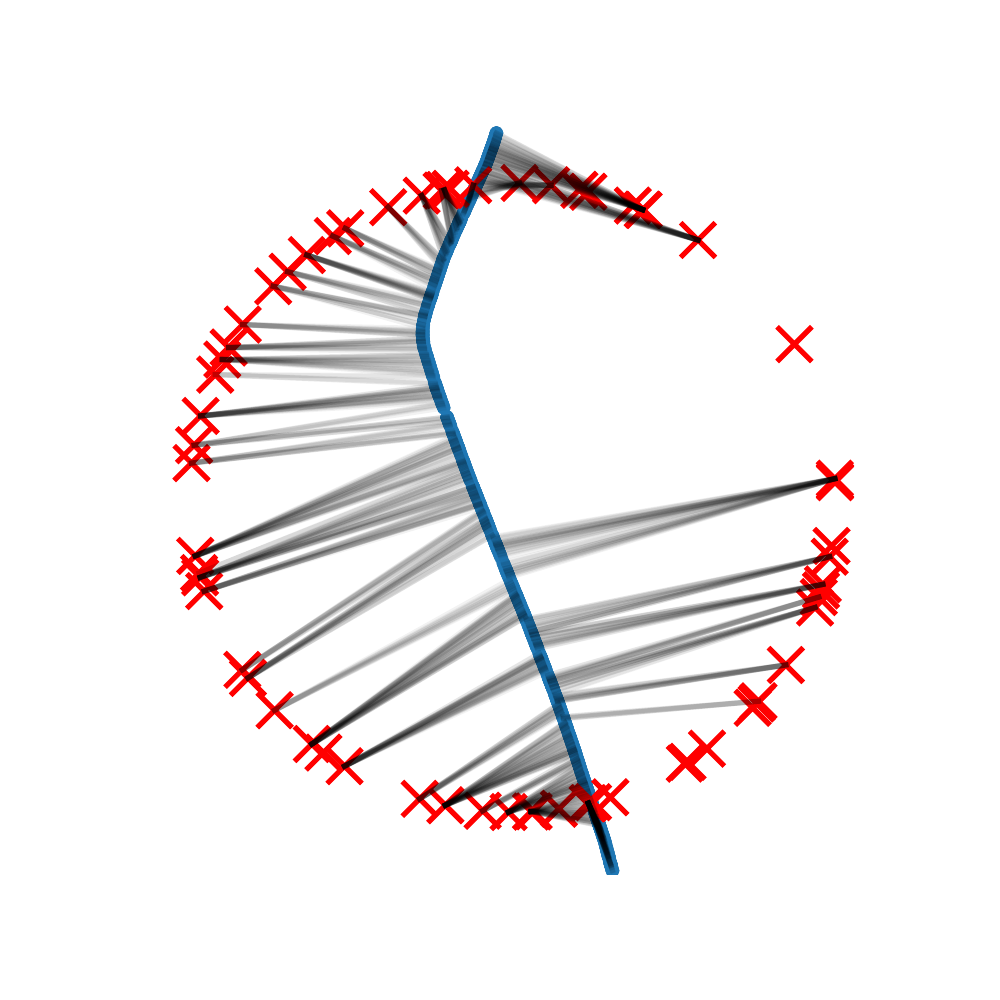}};
	\node[inner sep=0pt] (f4) at (2.1,-2.1)
	{\includegraphics[width=.09\textwidth]{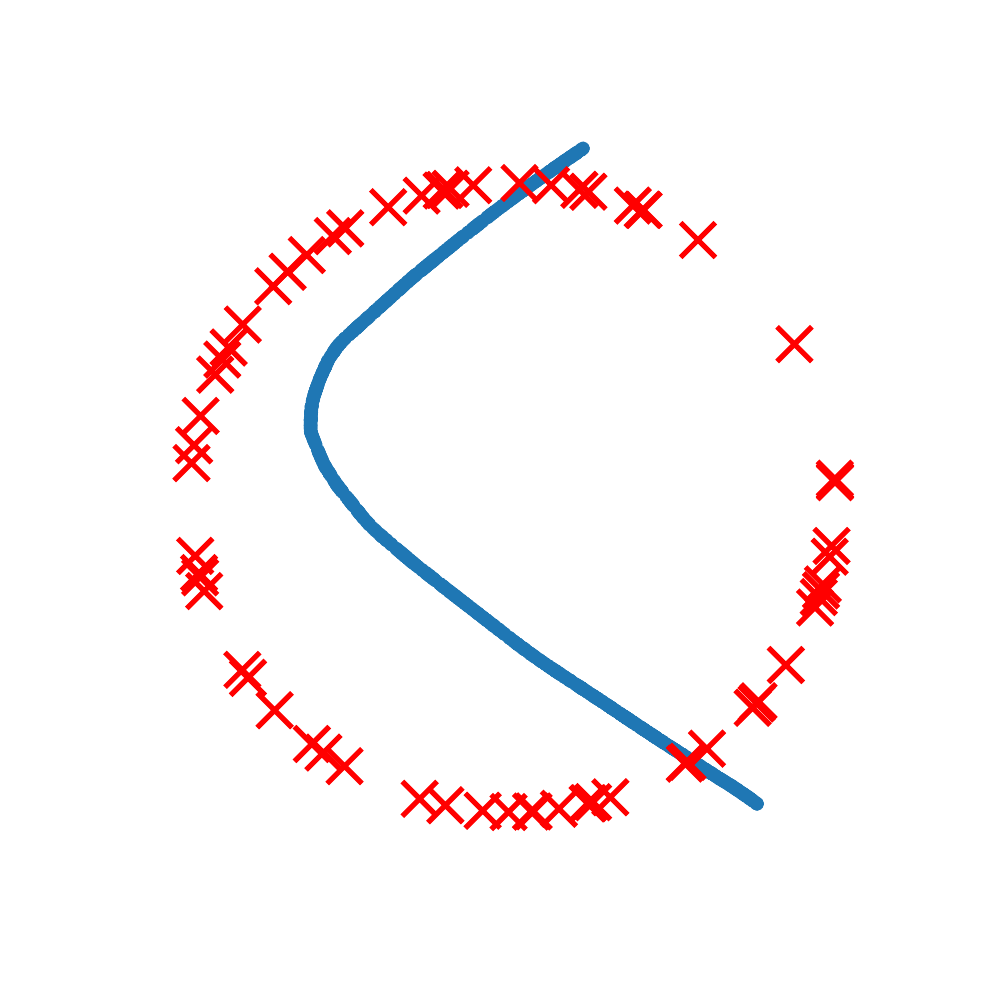}};
	\node[inner sep=0pt] (f5) at (4.2,0)
	{\includegraphics[width=.09\textwidth]{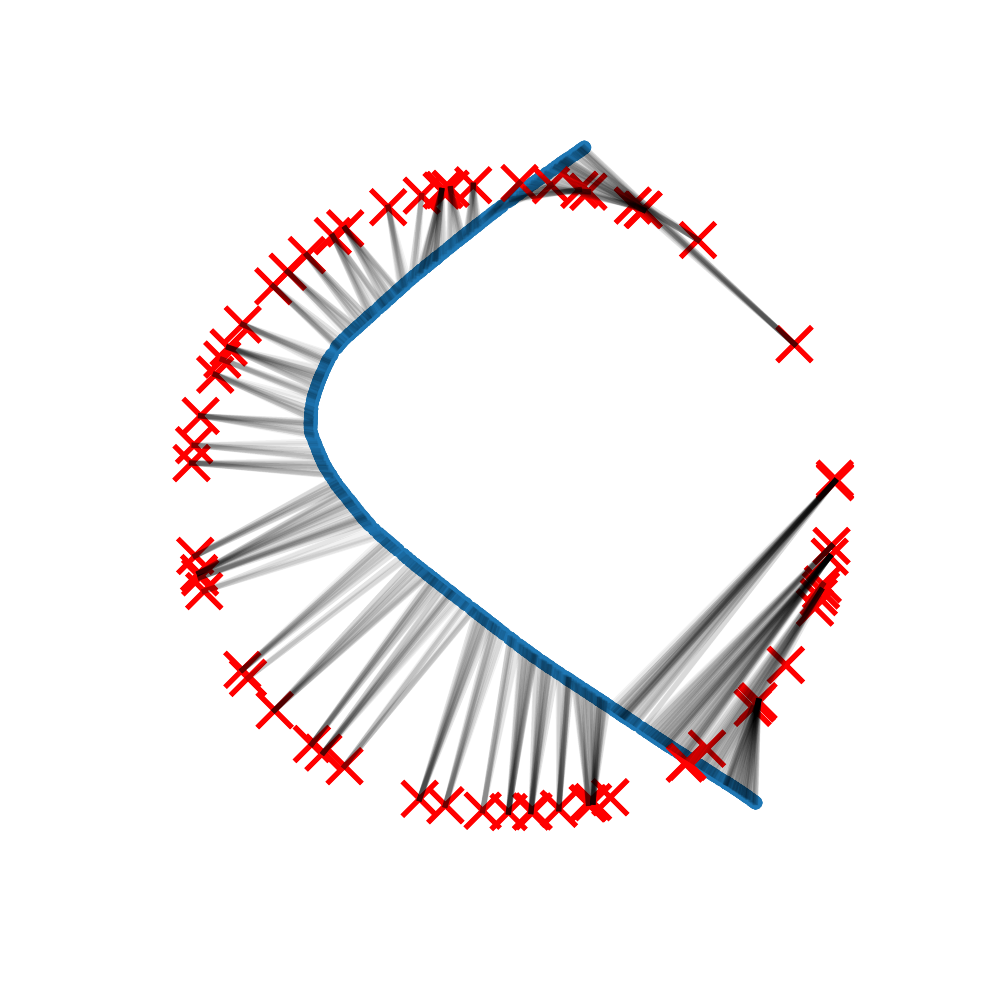}};
	\node[inner sep=0pt] (f6) at (6.3,0)
	{\includegraphics[width=.09\textwidth]{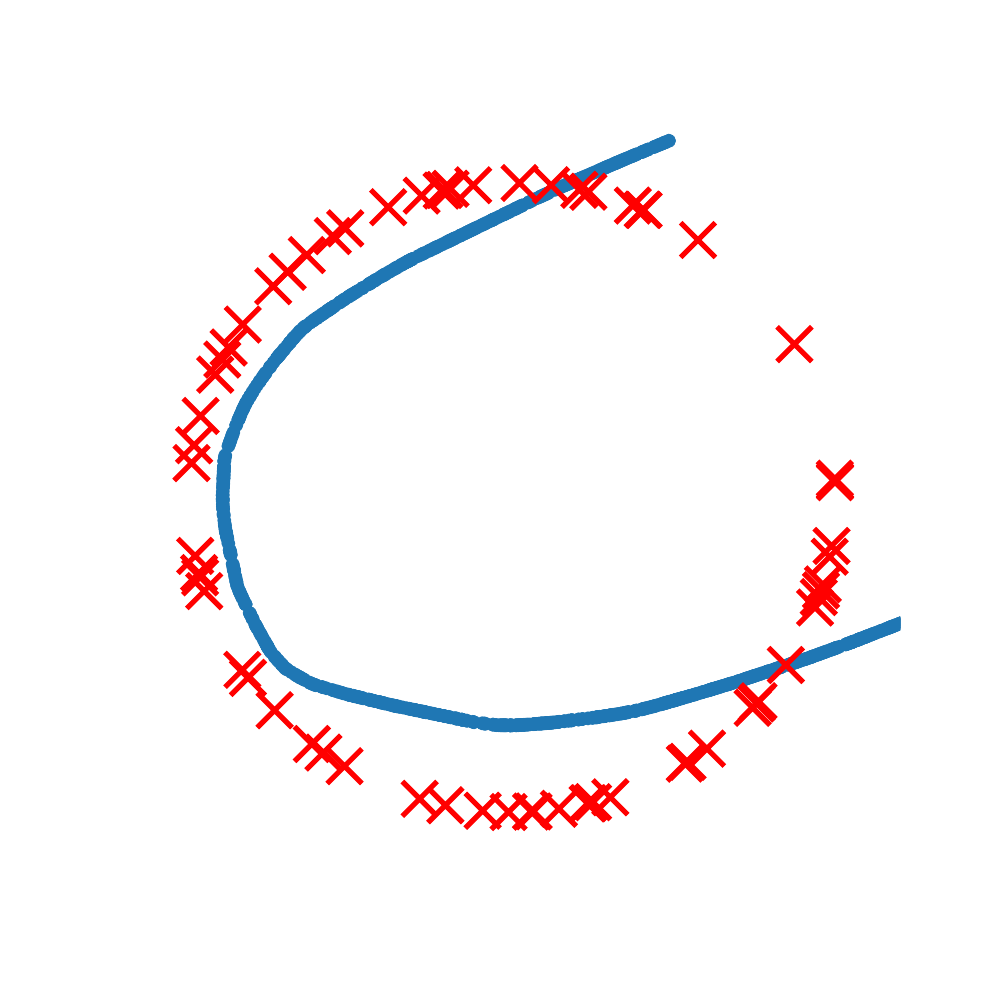}};
	\node[inner sep=0pt] (f7) at (4.2,-2.1)
	{\includegraphics[width=.09\textwidth]{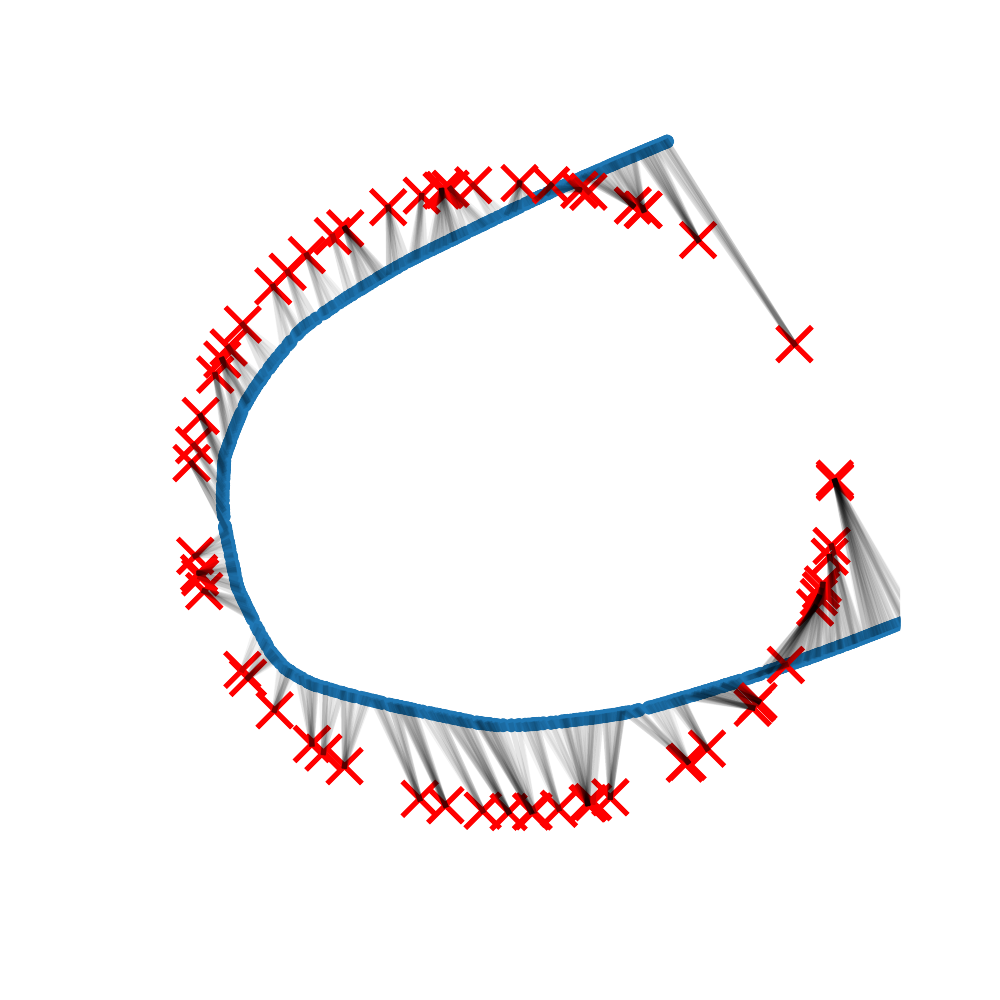}};
	\node[inner sep=0pt] (f8) at (6.3,-2.1)
	{\includegraphics[width=.09\textwidth]{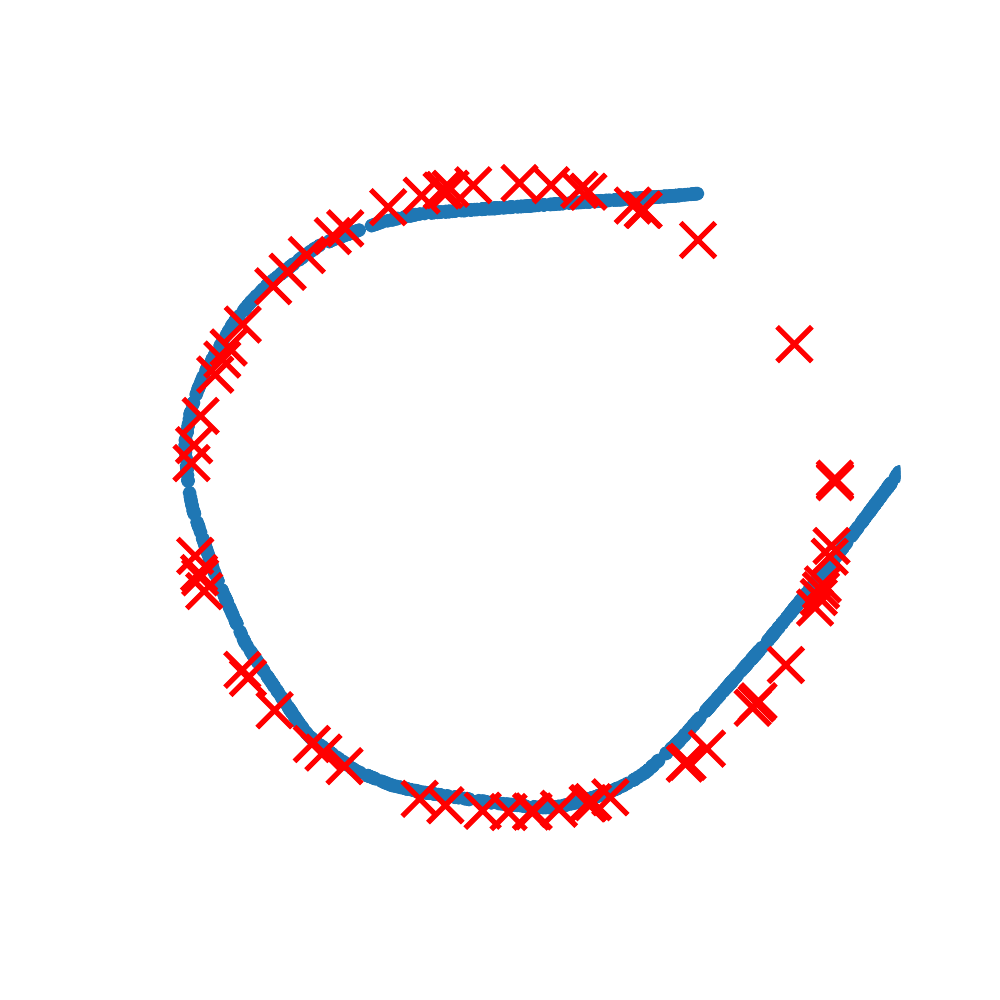}};
	\draw[->,ultra thick] (f1.east) -- (f2.west)
	node[node style] {FIT};
	\draw[->,dashed, ultra thick] (f2.south west) -- (f3.north east)
	node[node style] {OTS};
	\draw[->,ultra thick] (f3.east) -- (f4.west)
	node[node style,] {FIT};
	\draw[->,dashed, ultra thick] (f4.north east) -- (f5.south west)
	node[node style,] {OTS};
	\draw[->,ultra thick] (f5.east) -- (f6.west)
	node[node style,] {FIT};
	\draw[->,dashed, ultra thick] (f6.south west) -- (f7.north east)
	node[node style,] {OTS};
	\draw[->,ultra thick] (f7.east) -- (f8.west)
	node[node style,] {FIT};
	\end{tikzpicture}
          \ifarxiv
            \end{adjustbox}
          \else
          \fi
	\caption{The goal in this example is to fit the initial distribution (the blue
		central line) to the target distribution (the red outer ring).  The
		algorithm alternates OTS and FIT steps, first (OTS) associating input
		distribution samples with target distribution samples, and secondly (FIT)
		shifting input samples towards their targets, thereafter repeating the
		process.  Thanks to being gradual, and not merely sticking to the first or
		second OTS, the process has a hope of constructing a simple generator which
		generalizes well.}
	\label{fig:gradual:2}
\end{figure}

This paper proposes a simple non-adversarial but still alternating
procedure to fit generative networks
to target distributions.
The procedure
explicitly optimizes the Wasserstein-$p$ distance between the generator $g\#\mu$
and the target distribution $\hnu$.
As depicted in \Cref{fig:gradual:2},
it
alternates two steps:
given a current generator $g_i$,
an \emph{Optimal Transport Solver (OTS)} associates (or ``labels'') $g_i$'s probability mass
with that of the target distribution $\hnu$,
and then \emph{FIT} uses this labeling to find a new generator $g_{i+1}$
via a standard regression.

The effectiveness of this procedure hinges upon two key properties: it is
\textit{semi-discrete}, meaning the generators always give rise to continuous
distributions, and it is \textit{gradual}, meaning the generator is slowly
shifted towards the target distribution.  The key consequence of being
semi-discrete is that the underlying optimal transport can be realized with a
deterministic mapping.  Solvers exist for this problem and construct a
transport between the continuous distribution $g\#\mu$ and the target $\hnu$;
by contrast, methods forming only batch-to-batch transports using samples from $g\#\mu$
are biased and do
not exactly minimize the Wasserstein distance \citep{cramergan,sinkhorn_autodiff,OTGAN,
  WGANTS}.

The procedure also aims to be gradual, as in \Cref{fig:gradual:2}, slowly
deforming the source distribution into the target distribution.  While it is not
explicitly shown that this gradual property guarantees a simple generator,
promising empirical results measuring Wasserstein distance \emph{to a test set}
suggest that the learned generators generalize well.

\Cref{sec:algorithm} and \Cref{sec:analysis} detail the method along with a
variety of theoretical guarantees.  Foremost amongst these are showing that the
Wasserstein distance is indeed minimized, and secondly that it is minimized
with respect to not just the dataset distribution $\hnu$, but moreover the
underlying $\nu$ from which $\hnu$ was sampled.  This latter
property can be proved via the triangle inequality for Wasserstein distances,
however such an approach introduces the Wasserstein distance between $\nu$ and $\hnu$,
namely
$W(\nu,\hnu)$, which is exponential in dimension even in simple cases
\citep{SriperumbudurFGSL2012, generalization_equilibrium}.  Instead, we
show that when a parametric model captures the distributions well, then bounds
which are polynomial in dimension are possible.

Empirical results are presented in \Cref{sec:exp}. We find that our method
generates both quantitatively and qualitatively better digits than the compared
baselines on MNIST, and the performance is consistent on both training
and test datasets. We also experiment with the Thin-8 dataset \citep{thin8},
which is considered challenging for methods without a parametric loss.  We
discuss limitations of our method and conclude with some future directions.

 \section{Algorithm}
\label{sec:algorithm}

We present our alternating procedure in this section. We first describe the OTS and FIT steps in detail, and then give the overall algorithm.

\subsection{Optimal Transport Solver (OTS)}
\label{subsect:ots}

The Wasserstein-$p$ distance $W_p$ between two probability measures $\mu'$, $\nu'$ in a metric space $(X, d)$ is defined as
$$
W_p(\mu', \nu') := \left(\inf_{\gamma\in\Gamma(\mu',\nu')}\int_X d(x,y)^p\dif\gamma(x,y) \right)^{1/p},
$$
where $\Gamma(\mu', \nu')$ is the collection of probability measures on $X\times X$ with marginal distributions $\mu'$ and $\nu'$.
$W_p(\mu', \nu')$ is equal to the $\nicefrac 1 p$-th power of the \emph{optimal transport cost}
$$
\mathcal{T}_c(\mu',\nu'):=\inf_{\gamma\in\Gamma(\mu', \nu')}\int_X c(x, y)\dif\gamma(x,y)
$$
with cost function $c(x, y) := d(x, y)^p$.

By \emph{Kantorovich duality} \citep[Chapter 1]{villani_1},
\begin{equation}
\mathcal{T}_c(\mu',\nu') =\sup_{\varphi, \psi\in\Phi_c} \int_X \varphi(x)\mathrm{d}\mu'(x) + \int_X \psi(y)\mathrm{d}\nu'(y),
\label{eq:kantorovich_duality}
\end{equation}
where $\Phi_c$ is the collection of $(\varphi, \psi)$ where $\varphi\in L_1(\mathrm{d}\mu')$ and $\psi\in L_1(\mathrm{d}\nu')$ (which means $\varphi$ and $\psi$ are absolutely Lebesgue integrable functions with respect to $\mu', \nu'$), and $\varphi(x) + \psi(y) \leq c(x, y)$ for almost all $x, y$).

In our generative modeling case, $\mu'=g\#\mu$ is a pushforward of the simple distribution $\mu$ by $g$, and $\nu'=\hnu$
is an \emph{empirical measure}, meaning the uniform distribution on a training set $\{y_1,...,y_N\}$.
When $g\#\mu$ is continuous,
the optimal transport
problem here becomes \emph{semi-discrete}, and the maximizing choice of
$\varphi$ can be solved analytically, transforming the problem to optimization
over a vector in $\R^N$ \citep{cuturi_stochastic_ot, cuturi_book}:
\begin{align}
&\nonumber \mathcal{T}_c(g\#\mu,\hnu)
=\sup_{\varphi, \psi\in\Phi_c} \int_X\varphi(x)\mathrm{d}g\#\mu(x) + \frac{1}{N}\sum_{i=1}^N\psi(y_i)\\
\nonumber&=\sup_{\varphi\in\Phi'_{c,\hpsi}, \hpsi\in\R^N} \int_X \varphi(x)\mathrm{d}g\#\mu(x) + \frac{1}{N}\sum_{i=1}^N\hpsi_i\\
&=\sup_{\hpsi\in\R^N}\int_X\min_i(c(x, y_i)-\hpsi_i)\mathrm{d}g\#\mu(x) + \frac{1}{N}\sum_{i=1}^N\hpsi_i,
\label{eq:kantorovich_duality_semi_discrete}
\end{align}
where $\hpsi_i:=\psi(y_i)$, and $\Phi'_{c,\hpsi}$ is the collection of functions $\varphi\in L_1(\mathrm{d}(g\#\mu))$ such that $\varphi(x)+\hpsi_i \leq c(x, y_i)$ for almost all $x$ and $i=1,...,N$. The third equality comes from the maximizing choice of $\varphi$: $\varphi(x)=\min_i(c(x, y_i)-\hpsi_i):=\psi^c(x)$, the \emph{c-transform} of $\psi$.

Our OTS solver, presented in \Cref{alg:ots}, uses SGD to maximize \cref{eq:kantorovich_duality_semi_discrete},
or rather to minimize its negation
\begin{equation}
-\int_X\min_i(c(x, y_i)-\hpsi_i)\mathrm{d}g\#\mu(x) - \frac{1}{N}\sum_{i=1}^N\hpsi_i.
\label{eq:ots_loss}
\end{equation}
OTS is similar to Algorithm 2 of \cite{cuturi_stochastic_ot}, but without averaging.
Note as follows that \Cref{alg:ots} is convex in $\hpsi$, and thus a convergence theory
of OTS could be developed, although this direction is not pursued here.

\begin{algorithm}[h]
	\caption{Optimal Transport Solver (OTS)}
	\label{alg:ots}
	\begin{algorithmic}
		\STATE {\bfseries Input:} continuous generated distribution $g\#\mu$, training dataset $(y_1,...,y_n)$ corresponding to $\hnu$, cost function $c$, batch size $B$, learning rate $\eta$.
		\STATE {\bfseries Output:} $\hpsi=(\hpsi_1,...,\hpsi_N)$
		\STATE Initialize $t := 0 $ and $\hpsi^{(0)} \in \R^N$.
		\REPEAT
			\STATE Generate samples $\mathbf{x}=(x_1,...,x_B)$ from $g\#\mu$.
			\STATE Define loss $l(\mathbf{x}) := \frac{1}{B}\sum_{j=1}^B\min_i(c(x_j, y_i) - \hpsi_i) + \frac{1}{N}\sum_{i=1}^N\hpsi_i$.
			\STATE Update $\hpsi^{(t+1)} := \hpsi^{(t)} + \eta\cdot\nabla_{\hpsi}l(\mathbf{x})$.
			\STATE Update $t := t + 1$.
		\UNTIL Stopping criterion is satisfied.
		\RETURN $\hpsi^{(t)}$.
	\end{algorithmic}
\end{algorithm}
 
 \begin{proposition}
 	\Cref{eq:ots_loss} is a convex function of $\hpsi$.
 \end{proposition}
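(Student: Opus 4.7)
The plan is to show convexity term-by-term. For each fixed $x$, the map $\hpsi \mapsto c(x,y_i) - \hpsi_i$ is affine (viewed as a function on $\R^N$, it only depends on the $i$-th coordinate, with slope $-1$), so $\hpsi \mapsto \min_i(c(x,y_i) - \hpsi_i)$ is a pointwise minimum of finitely many affine functions of $\hpsi$, hence concave in $\hpsi$. Negating, $\hpsi \mapsto -\min_i(c(x,y_i) - \hpsi_i)$ is convex in $\hpsi$.

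Next I would integrate against $\dif g\#\mu(x)$: convexity is preserved under nonnegative integration, because for any $\lambda \in [0,1]$ and $\hpsi, \hpsi' \in \R^N$ one can apply the pointwise convexity inequality inside the integrand and use linearity of the integral. (If a measurability nuisance arises, note that the integrand is a Lipschitz function of $\hpsi$ uniformly in $x$ with bounded Lipschitz constant, so everything is well defined.) This gives convexity of $\hpsi \mapsto -\int_X \min_i(c(x,y_i)-\hpsi_i)\,\dif g\#\mu(x)$.

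Finally, $\hpsi \mapsto -\frac{1}{N}\sum_{i=1}^N \hpsi_i$ is linear in $\hpsi$, hence convex, and the sum of two convex functions is convex, yielding the claim.

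There is no real obstacle here; the only subtlety is the standard fact that an integral of a convex family of functions is convex, which follows immediately from the definition together with Fubini/linearity of integration against a probability measure.
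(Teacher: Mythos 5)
Your proof is correct and follows essentially the same approach as the paper: a pointwise minimum of affine functions of $\hpsi$ is concave, negation flips this to convex, integration against a probability measure preserves convexity, and the remaining term is linear. If anything, you are more careful than the paper's own one-line argument, which states the conclusion as ``\cref{eq:ots_loss} is therefore concave'' (apparently dropping the sign from the negation in \cref{eq:ots_loss}); your version tracks the negation explicitly and reaches the correct convexity claim stated in the proposition.
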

 \begin{proof}
   It suffices to note that
$\min_i \del[1]{ c(g(x),y_i) - \hpsi_i }$
   is a minimum of concave functions and thus concave;
   \cref{eq:ots_loss} is therefore concave since it is a convex combination of concave functions with an
   additional linear term.
\end{proof}
In the \emph{semi-discrete} setting where $g\#\mu$ is continuous and $\hnu$ is discrete, it can be proved that the \emph{Kantorovich optimal transference plan} computed via \cref{eq:kantorovich_duality} is indeed a \emph{Monge optimal transference plan} characterized by $\argmin_i (x, y_i)-\hpsi_i$, which is a deterministic mapping providing the regression target for our FIT step.

\begin{proposition}
	\label{prop:argmin}
	Assume $X=\R^d$, $g\#\mu$ is continuous, and the cost function $c(x,y)$ takes the form of $c(x-y)$ and is a strictly convex, superlinear function on $\R^d$. Given the optimal $\hpsi$ for \cref{eq:ots_loss}, then  $T(x):=y_{\argmin_i c(x,y_i)-\hpsi_i}$, which is a Monge transference plan, is the unique Kantorovich optimal transference plan from $g\#\mu$ to $\nu$.
\end{proposition}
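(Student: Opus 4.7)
The plan is to follow the standard Gangbo--McCann argument specialized to the semi-discrete setting: reduce optimality to concentration on the contact set, then use strict convexity to show the contact set is a graph $g\#\mu$-a.e.

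First, I would invoke Kantorovich duality. By \cref{eq:kantorovich_duality_semi_discrete}, for the given optimal dual vector $\hpsi$ the maximizing potential is the $c$-transform $\varphi(x) := \min_i(c(x-y_i) - \hpsi_i)$. Standard complementary slackness for the Kantorovich primal/dual pair then yields that every optimal coupling $\gamma$ of $g\#\mu$ and $\hnu$ is supported on the contact set $S := \{(x,y_i) : \varphi(x) + \hpsi_i = c(x-y_i)\}$, equivalently on pairs with $y_i \in \argmin_j (c(x-y_j) - \hpsi_j)$. It therefore suffices to show that the argmin is a singleton for $g\#\mu$-a.e.\ $x$, since then any optimal $\gamma$ is forced to concentrate on the graph of the single-valued map $T$.

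Second, fix any pair $i \neq j$ and consider the tie set $N_{ij} := \{x \in \R^d : c(x-y_i) - \hpsi_i = c(x-y_j) - \hpsi_j\}$, with $f(x) := c(x-y_i) - c(x-y_j)$. Strict convexity of $c$ makes $\nabla c$ strictly monotone and hence injective on its domain of differentiability: if $\nabla c(u) = \nabla c(v)$ then $\langle \nabla c(u) - \nabla c(v), u-v \rangle = 0$ forces $u=v$. Consequently $\nabla f(x) = \nabla c(x-y_i) - \nabla c(x-y_j)$ vanishes only when $y_i = y_j$, which is excluded. Since $c$ is convex (and superlinear, so finite everywhere), it is differentiable off a Lebesgue-null set; on the differentiable locus the implicit function theorem exhibits $N_{ij}$ as a $(d{-}1)$-dimensional $C^0$-hypersurface, and in total $N_{ij}$ is Lebesgue-null. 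Because $g\#\mu$ is continuous (absolutely continuous with respect to Lebesgue measure), $g\#\mu(N_{ij})=0$; a union bound over the finitely many pairs gives uniqueness of the argmin $g\#\mu$-a.e.

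Third, on the full-measure set where the argmin is unique, define $T(x) := y_{\argmin_i c(x - y_i) - \hpsi_i}$. By step one, every optimal coupling $\gamma$ is concentrated on $S$, which on this full-measure set coincides with the graph of $T$; therefore $\gamma = (\mathrm{id} \times T)\#(g\#\mu)$. Marginal consistency of $\gamma$ with $\hnu$ then automatically delivers $T\#(g\#\mu) = \hnu$, so $T$ is a Monge transference plan, and the displayed formula for $\gamma$ shows it is the unique Kantorovich optimum.

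The main obstacle is the Lebesgue-null claim for the tie sets $N_{ij}$, which is precisely where strict convexity is used; the subtle point is that $c$ is only assumed strictly convex and superlinear, not $C^1$, so the implicit function argument must be carried out on the generic (differentiable) locus with the a.e.\ non-differentiability set separately absorbed into the null set. Superlinearity also underwrites the existence and attainment of the optimal $\hpsi$ needed to even set up the complementary slackness step.
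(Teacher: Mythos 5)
Your proposal is correct in structure but takes a genuinely different route from the paper. The paper's proof simply cites Villani's Theorem~2.44 (the Gangbo--McCann result: under absolute continuity of the source and strict convexity plus superlinearity of $c(x-y)$, the unique optimal plan is given by $T(x)=x-\nabla c^*(\nabla\varphi(x))$) and then performs a short chain-rule computation showing that this formula, applied to $\varphi(x)=\min_i\bigl(c(x-y_i)-\hpsi_i\bigr)$, simplifies to $y_{\argmin_i c(x-y_i)-\hpsi_i}$: the gradient of the min equals $\nabla c(x-y_m)$ at the active index $m$, and $\nabla c^*\circ\nabla c=\mathrm{id}$ cancels. You instead re-derive the core of that cited theorem from scratch in the semi-discrete case, via complementary slackness on the contact set plus a direct argument that the tie sets are $g\#\mu$-null. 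What each route buys: the paper's approach is shorter and delegates the hard measure-theoretic work to the reference; yours is self-contained, makes the role of strict convexity and absolute continuity explicit, and exposes exactly where ties could fail (which the paper discusses separately in its appendix).

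One technical inaccuracy you should repair: the implicit function theorem requires $C^1$ regularity, which $c$ need not have, and its a.e.\ differentiability set is only a $G_\delta$. Invoking IFT ``on the generic locus'' and ``absorbing'' the bad set is not quite a proof. The clean replacement is a Lebesgue density-point argument: if the tie set $N_{ij}$ had positive measure, a.e.\ point $x\in N_{ij}$ would be both a density point of $N_{ij}$ and a differentiability point of $f(x)=c(x-y_i)-c(x-y_j)$; first-order expansion of $f$ along the direction of $\nabla f(x)\neq 0$ shows a fixed fraction of every small ball around $x$ escapes $N_{ij}$, contradicting density. (Alternatively, a Fubini/co-area argument for Lipschitz functions with a.e.\ nonvanishing gradient.) With that substitution, your argument is complete; the rest --- concentration on the contact set, injectivity of $\nabla c$ from strict monotonicity, and the union bound over finitely many pairs --- is sound.
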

(The proof is technical, and appears in the Appendix.)

We give some remarks to \Cref{prop:argmin}. First, Wasserstein-$p$ distances on $\ell_p$ metric with $p > 1$ satisfy strict convexity and superlinearity \citep{gangbo1996geometry}, while $p=1$ does not. On the other hand, in practice we have found that for $p=1$, \Cref{alg:ots} still converges to near-optimal transference plans, and this particular choice of metric generates crisper images than others.

Second, the continuity of $g\#\mu$, which is required for the uniqueness of OT plans may be violated in practice, but
can be theoretically circumvented by adding an arbitrarily small perturbation to $g$'s output. On the other hand, since the optimal plan lies in the set $\varphi(x)+\psi(y)=c(x,y)$ almost surely \citep[Theorem 5.10]{villani_2}, it has to have the ``argmin form'' in \Cref{prop:argmin}. Thus, the only condition in which $T(x):=y_{\argmin_i c(x,y_i)-\hpsi_i}$ does not characterize a unique Monge OT plan, is the existence of ties when computing $\argmin$, which does not happen in practice. We give some additional discussion about this issue in the Appendix. 

A drawback of \Cref{alg:ots} is that computing minimum on the whole dataset has
$O(N)$ complexity, which is costly for extremely large datasets. We will revisit
this issue in \Cref{sec:exp}.

\subsection{Fitting the Optimal Transference Plan (FIT)}
\label{subsect:fit}

Given an initial generator $g$, and an optimal transference plan $T$ between
$g\#\mu$ and $\hnu$ thanks to OTS, we update $g$ to obtain a
new generator $g'$ by simply sampling $z \sim \mu$ and regressing the \emph{new}
generated sample $g'(z)$ towards the \emph{old} OT plan $T(g(z))$, as detailed
in \Cref{alg:fit}.

Under a few assumptions detailed in \Cref{sec:optimization},
\Cref{alg:fit} is guaranteed to return a generator $g'$ with strictly lesser optimal
transport cost
$\cT_c(g'\#\mu, \hnu) \leq \mathbb{E}_{x\sim g'\#\mu} c(x, T(x)) < \mathbb{E}_{x\sim g\#\mu} c(x, T(x)) = \cT_c(g\#\mu, \hnu),$
where $T$ denotes an exact optimal plan between $g\#\mu$ and $\hnu$; \Cref{sec:optimization} moreover considers the case of a merely approximately optimal $T$, as returned by OTS.

\begin{algorithm}[h]
  \caption{Fitting Optimal Transport Plan (FIT)}
	\label{alg:fit}
	\begin{algorithmic}
		\STATE {\bfseries Input:} sampling distribution $\mu$, old generator $g$ with parameter $\theta$, transference plan $T$, cost function $c$, batch size $B$, learning rate $\eta$.
		\STATE {\bfseries Output:} new generator $g'$ with parameter $\theta'$.
		\STATE Initialize $t := 0$ and $g'$ with parameter $\theta'^{(0)} = \theta$.
		\REPEAT
		\STATE Generate random noise $\mathbf{z}=(z_1,...,z_B)$ from $\mu$.
		\STATE Define loss $l(\mathbf{z}) := \frac{1}{B}\sum_{j=1}^B c(g'(z), T(g(z)))$.
		\STATE Update $\theta^{(t+1)} := \theta'^{(t)} - \eta\cdot\nabla_{\theta'}l(\mathbf{z})$.
		\STATE Update $t := t + 1$.
		\UNTIL Stopping criterion is satisfied.
		\RETURN $g'$ with parameter $\theta'^{(t)}$.
	\end{algorithmic}
\end{algorithm}

\subsection{The Overall Algorithm}
\label{sect:overall}

The overall algorithm, presented in \Cref{alg:overall},
alternates between OTS and FIT: during
iteration $i$, OTS solves for the optimal transport map $T$ between old
generated distribution $g_{i}\#\mu$ and $\hnu$, then FIT regresses $g_{i+1}\#\mu$
towards $T\#g_{i}\#\mu$ to obtain lower Wasserstein distance.

\begin{algorithm}[h]
	\caption{Overall Algorithm}
	\label{alg:overall}
	\begin{algorithmic}
		\STATE {\bfseries Input:} sampling distribution $\mu$, training dataset $(y_1,...,y_n)$ corresponding to $\hnu$, initialized generator $g_0$ with parameter $\theta_0$, cost function $c$, batch size $B$, learning rate $\eta$.
		\STATE {\bfseries Output:} final generator $g$ with parameter $\theta$.
		\STATE Initialize $i := 0$ and $g_0$ with parameter $\theta^{(0)}=\theta_0$.
		\REPEAT
			\STATE Compute $\hpsi_i := \text{OTS}(g_{i}\#\mu, \hnu, c, B, \eta)$.
			\STATE Get $T_i$ as $T_i(x) := \argmin_{y_i}c(x,y_i)-\hpsi_i$.
			\STATE Compute $g_{i+1} := \text{FIT}(\mu, g_{i}, T_i, c, B, \eta)$ with parameter $\theta^{(i+1)}$.
			
			\STATE Update $i := i + 1$.
		\UNTIL Stopping criterion is satisfied.
		\RETURN $g$ with parameter $\theta^{(i)}$.
	\end{algorithmic}
\end{algorithm}

 \section{Theoretical Analysis}
\label{sec:analysis}

We now analyze the optimization and generalization properties of our \Cref{alg:overall}: we will show that the method indeed minimizes the empirical transport
cost, meaning $\cT_c(g_i\#\mu,\hnu)\to 0$, and also generalizes to the transport cost over the underlying distribution, meaning $\cT_c(g_i\#\mu,\nu)\to 0$.

\subsection{Optimization Guarantee: $\cT_c(g_i\#\mu,\hnu)\rightarrow 0$.}
\label{sec:optimization}
Our analysis works for costs $\cT_c$ whose $\beta$-th powers satisfy the triangle inequality, such as $\cT_c:=W_p^p$ over any metric space and $\beta=\nicefrac 1 p$, if $p\geq 1$ \citep[Theorem 7.3]{villani_1}.

\def\epsota{\eps_{\textup{ot1}}}
\def\epsotb{\eps_{\textup{ot2}}}
\def\epsotb{\eps_{\textup{ot2}}}
\def\epsfit{\eps_{\textup{fit}}}

Our method is parameterized by a scalar $\alpha \in(0,\nicefrac 1 2)$ whose role is to
determine the relative precisions of OTS and FIT, controlling the gradual property of
our method.  Defining $C_\mu(f,g) := \int c(f(x), g(x))\dif\mu(x)$, we assume that for each round $i$, there exist error terms $\epsota, \epsotb, \epsfit$ such that:

\begin{enumerate}[topsep=0pt, label={(\arabic*)}, wide]
	\item Round $i$ of OTS finds transport $T_i$ satisfying
	$$ \cT_c^\beta(g_{i}\#\mu, \hnu) \leq C_\mu^\beta(T_i\circ g_i, g_i) \leq \cT_c^\beta
	(g_{i}\#\mu, \hnu)(1 + \epsota)$$ (approximate optimality), and $\cT_c^\beta(T_i\#g_{i}\#\mu,\hnu) \leq \epsotb \leq \alpha \cT_c^\beta(g_i\#\mu,\hnu)$ (approximate pushforward);

	\item Round $i$ of FIT finds $g_{i+1}$ satisfying
	\begin{align*}
	&C_\mu^\beta(T_i\circ g_{i}, g_{i+1}) \leq \epsfit\leq \frac{1-2\alpha}{1+\epsota}C_\mu^\beta(T_i\circ g_{i}, g_{i})\\
	&\leq (1-2\alpha)\cT_c^\beta(g_{i}\#\mu, \hnu)
	\text{ (progress of FIT).}
	\end{align*}
\end{enumerate}
\vspace*{-7pt}
In addition, we assume each $g_i\#\mu$ is continuous to guarantee the existence of Monge transport plan.

(1) is satisfied by \Cref{alg:ots} since it represents a convex problem;
moreover, it is necessary in practice to assume only approximate solutions.
(2) holds when there is still room for the generative network to improve Wasserstein distance: otherwise, the training process can be stopped.

$\alpha$ is a tunable parameter of our overall algorithm: a large $\alpha$ relaxes the optimality requirement of OTS (which allows early stopping of \Cref{alg:ots}) but requires large progress of FIT (which prevents early stopping of \Cref{alg:fit}), and vice versa. This
gives us a principled way to determine the stopping criteria of OTS and
FIT.

Given the assumptions, we now show $\cT_c(g_i\#\mu,\hnu)\rightarrow 0$. By triangle inequality,
\begin{align*}
&\cT_c^\beta(g_{i+1}\#\mu, \hnu)\\
\leq\ &
\cT_c^\beta(g_{i+1}\#\mu, T_i\#g_i\#\mu)
+ \cT_c^\beta(T_i\#g_i\#\mu, \hnu)\\
\leq\ &
\cT_c^\beta(g_{i+1}\#\mu, T_i\#g_i\#\mu)
+ \epsotb.
\end{align*}

Since $g_{i+1}\#\mu$ is continuous, $\cT_c(g_{i+1}\#\mu, T_i\#g_{i}\#\mu)$
is realized by some deterministic transport $T_i'$ satisfying
$T_i'\#g_{i+1}\#\mu = T_i\#g_i\#\mu$, whereby
\begin{align*}
\cT_c^\beta(g_{i+1}\#\mu, T_i\#g_i\#\mu)
&=
C_\mu^\beta(T_i'\circ g_{i+1}, g_{i+1})
\\
&=
C_\mu^\beta(T_i\circ g_{i}, g_{i+1})\leq
\epsfit.
\end{align*}
Combining these steps with the upper bounds on $\epsotb$ and $\epsfit$,
\begin{align*}
\cT_c^\beta(g_{i+1}\#\mu, \hnu)
\leq \epsotb + \epsfit
&\leq (1-\alpha) \cT_c^\beta(g_i\#\mu, \hnu)\\
&\leq e^{-\alpha} \cT_c^\beta(g_i\#\mu, \hnu).
\end{align*}
Summarizing these steps and iterating this inequality gives the following bound
on $\cT_c(g_t\#\mu,\hnu)$, which goes to $0$ as $t\to0$.

\begin{theorem}
	\label{fact:iter}
	Suppose (as discussed above) that $\cT_c^\beta$ satisfies the triangle inequality,
	each $g_i\#\mu$ is continuous, and
	the OTS and FIT iterations satisfy (1) (2),
then $\cT_c(g_t\#\mu, \hnu) \leq e^{\nicefrac{-t\alpha}{\beta}} \cT_c(g_0\#\mu,\hnu)$.
\end{theorem}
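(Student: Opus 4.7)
The plan is to establish a one-step geometric contraction on $\cT_c^\beta(g_i\#\mu,\hnu)$ and then iterate. The starting point is the triangle inequality assumed for $\cT_c^\beta$, applied with intermediate measure $T_i\#g_i\#\mu$:
$$\cT_c^\beta(g_{i+1}\#\mu,\hnu) \leq \cT_c^\beta(g_{i+1}\#\mu, T_i\#g_i\#\mu) + \cT_c^\beta(T_i\#g_i\#\mu, \hnu).$$
The second summand is immediately controlled by $\epsotb$ from the approximate pushforward clause of assumption (1), so the work lies in bounding the first.

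For the first summand I would exploit the semi-discrete structure: because $g_{i+1}\#\mu$ is continuous and $T_i\#g_i\#\mu$ is supported on the finite training set, the map $z \mapsto (g_{i+1}(z), T_i(g_i(z)))$ pushes $\mu$ forward to a valid coupling of $g_{i+1}\#\mu$ and $T_i\#g_i\#\mu$. Hence by the definition of $\cT_c$ as an infimum over couplings and by assumption (2),
$$\cT_c^\beta(g_{i+1}\#\mu, T_i\#g_i\#\mu) \leq C_\mu^\beta(T_i\circ g_i, g_{i+1}) \leq \epsfit.$$
Equivalently (following the style used earlier in the excerpt), one may invoke an optimal Monge map $T_i'$ realizing $\cT_c(g_{i+1}\#\mu, T_i\#g_i\#\mu)$ and compare its cost against the candidate $T_i \circ g_i$, which sends $\mu$ to the same discrete target. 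Combining with the assumed upper bounds $\epsotb \leq \alpha\,\cT_c^\beta(g_i\#\mu,\hnu)$ and $\epsfit \leq (1-2\alpha)\,\cT_c^\beta(g_i\#\mu,\hnu)$ yields the one-step contraction
$$\cT_c^\beta(g_{i+1}\#\mu,\hnu) \leq (1-\alpha)\,\cT_c^\beta(g_i\#\mu,\hnu) \leq e^{-\alpha}\,\cT_c^\beta(g_i\#\mu,\hnu).$$

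Iterating this bound from $i = 0$ through $t-1$ gives $\cT_c^\beta(g_t\#\mu,\hnu) \leq e^{-t\alpha}\,\cT_c^\beta(g_0\#\mu,\hnu)$, and taking $\beta$-th roots produces the stated rate. The only mildly subtle step is the passage from $\cT_c^\beta(g_{i+1}\#\mu, T_i\#g_i\#\mu)$ to $C_\mu^\beta(T_i\circ g_i, g_{i+1})$: this is where the semi-discrete setting is essential, since it is precisely what lets FIT's regression error upper-bound a transport cost (rather than leaving an uncontrolled gap between couplings and maps). Everything else is routine manipulation of the triangle inequality for $\cT_c^\beta$ together with the elementary bound $1-\alpha \leq e^{-\alpha}$.
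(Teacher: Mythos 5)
Your proof takes the same route as the paper: triangle inequality for $\cT_c^\beta$ through the intermediate measure $T_i\#g_i\#\mu$, control of the second summand by $\epsotb$, control of the first summand by the FIT error, the elementary bound $1-\alpha \le e^{-\alpha}$, and iteration. The one place you diverge slightly---and actually more cleanly---is the bound on $\cT_c^\beta(g_{i+1}\#\mu, T_i\#g_i\#\mu)$: you observe directly that $z \mapsto (g_{i+1}(z), T_i(g_i(z)))$ pushes $\mu$ forward to a coupling of the two measures, so $\cT_c(g_{i+1}\#\mu, T_i\#g_i\#\mu) \le C_\mu(T_i\circ g_i, g_{i+1})$ by the infimum definition, whereas the paper first invokes an optimal Monge map $T_i'$ (using continuity of $g_{i+1}\#\mu$) and then compares it against the candidate $T_i\circ g_i$, recording that comparison as an equality where an inequality is what is actually meant and used. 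One small caveat on your closing remark: this coupling bound holds with no continuity assumption at all, so calling the semi-discrete structure ``essential'' for that particular step slightly overstates its role; continuity of $g_i\#\mu$ is what lets OTS produce a deterministic Monge map $T_i$ to regress against in the first place, not what lets the FIT regression cost upper-bound the transport cost here.
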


\subsection{Generalization Guarantee: $\cT_c(g_i\#\mu,\nu)\to 0$.}
\label{sect:generalization}

\newcommand\BOUND{\ensuremath{D_{n,\Psi}}}

In the context of generative modeling, \emph{generalization} means that the model fitted via training dataset $\hnu$ not only has low divergence $\cD(g_i\#\mu, \hnu)$ to $\hnu$, but also low divergence to $\nu$, the underlying distribution from which $\hnu$ is drawn \emph{i.i.d.}. If $\cT_c$ satisfies triangle inequality, then
\[
\cT_c(g_i\#\mu,\nu)
\leq \cT_c(g_i\#\mu,\hnu)
+ \cT_c(\hnu, \nu),
\]
and the second term goes to 0 with sample size $n\rightarrow\infty$, but the sample complexity depends exponentially on the dimensionality \citep{SriperumbudurFGSL2012, generalization_equilibrium, thin8}.  To remove this exponential dependence, we make parametric assumptions about the underlying distribution $\nu$; a related idea was investigated in detail
in parallel work \citep{bai2018approximability}.

Our approach is to assume the \emph{Kantorovich potential} $\hpsi$, defined on $\hnu$, is induced from a function $\psi\in\Psi$ defined on $\nu$, where $\Psi$ is a function class with certain approximation and generalization guarantees. Since neural networks are one such function classes (as will be discussed later), this is an empirically verifiable assumption (by fitting a neural network to approximate $\hpsi$), and is indeed verified in the Appendix.

For this part we use slightly different notation: for a fixed sample size $n$, let $(g_n, T_n, \hnu_n)$ denote the earlier $(g,T,\hnu)$. We first suppose the following \emph{approximation condition}:
Suppose that for any $\eps >0$, there exists a class of functions $\Psi$ so that
\begin{equation}
\sup_{\psi \in L_1(\nu)}
\int \psi^c \dif\mu + \int \psi\dif\nu
\leq
\eps
+
\sup_{\psi \in \Psi}
\int \psi^c \dif\mu + \int \psi\dif\nu;
\label{eq:cond:apx}
\end{equation}
thanks to the extensive literature on function approximation with neural networks \citep{nn_stone_weierstrass,cybenko,yarotsky}, there are various ways to guarantee this, for example increasing the depth of the network. A second assumption is a \emph{generalization condition}:
given any sample size $n$ and function class $\Psi$,
suppose there exists $\BOUND\geq 0$ so that
with probability at least $1-\delta$ over a draw of $n$ examples from $\nu$
(giving rise to empirical measure $\hnu_n$),
every $\psi\in\Psi$ satisfies
\begin{equation}
\int \psi \dif\nu \leq \BOUND + \int \psi \dif\hnu_n;
\label{eq:cond:gen}
\end{equation}
thanks to the extensive theory of neural network generalization,
there are in turn various ways to provide such a guarantee \citep{anthony_bartlett_nn}, for example through VC-dimension of neural networks.

Combining these two assumptions,
\begin{align*}
&\cT_c(g_n\#\mu, \nu)=
\sup_{\psi\in L_1(\nu)}\cbr{
	\int \psi^c \dif(g_n\#\mu) + \int \psi \dif\nu
}
\\
&\leq
\eps
+
\sup_{\psi\in \Psi}\cbr{
	\int \psi^c \dif(g_n\#\mu) + \int \psi \dif\nu
}
\\
&\leq
\BOUND
+
\eps
+
\sup_{\psi\in \Psi}\cbr{
	\int \psi^c \dif(g_n\#\mu) + \int \psi \dif\hnu_n
}
\\
&\leq
\BOUND
+
\eps
+
\sup_{\psi\in L_1(\hnu_n)}\cbr{
	\int \psi^c \dif(g_n\#\mu) + \int \psi \dif\hnu_n
}
\\
&\leq
\BOUND
+
\eps
+
\cT_c(g_n\#\mu, \hnu_n).
\end{align*}
This can be summarized as follows.
\begin{theorem}
	Let $\eps > 0$ be given,
	and suppose assumptions \cref{eq:cond:apx,eq:cond:gen} hold.
Then, with probability at least $1-\delta$ over the draw of $n$ examples from $\nu$,
	\[
	\cT_c(g_n\#\mu, \nu)
	\leq
	\BOUND
	+
	\eps
	+
	\cT_c(g_n\#\mu, \hnu_n).
	\]
\end{theorem}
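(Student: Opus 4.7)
The proof essentially follows from stringing together the four inequalities already displayed immediately above the theorem statement, so my plan is simply to justify each link in that chain and be explicit about where the $1-\delta$ confidence enters.

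First I would open with Kantorovich duality (\ref{eq:kantorovich_duality}) applied to $\cT_c(g_n\#\mu,\nu)$, rewriting it as
\[
\cT_c(g_n\#\mu, \nu) = \sup_{\psi\in L_1(\nu)}\left\{\int \psi^c\dif(g_n\#\mu) + \int \psi\dif\nu\right\},
\]
where we are using that for fixed $\psi$, the optimal $\varphi$ in duality is the $c$-transform $\psi^c$. This is the standard semi-discrete reformulation that was already used to derive \cref{eq:kantorovich_duality_semi_discrete}, so I would cite it rather than re-derive.

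Next I would invoke the approximation condition \cref{eq:cond:apx}, with the role of $\mu$ in that condition played by $g_n\#\mu$, to replace the supremum over $L_1(\nu)$ with a supremum over $\Psi$ at the cost of an additive $\eps$. Then I would apply the generalization condition \cref{eq:cond:gen} term-by-term inside the supremum: since for every $\psi\in\Psi$ we have $\int\psi\dif\nu \leq \BOUND + \int\psi\dif\hnu_n$ with probability at least $1-\delta$, taking the supremum over $\psi\in\Psi$ on both sides yields a bound with $\BOUND$ added and $\nu$ replaced by $\hnu_n$. This is the only step that is probabilistic, and it is crucially uniform over $\psi\in\Psi$, which is exactly what \cref{eq:cond:gen} guarantees.

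Finally I would relax the supremum over $\Psi$ to a supremum over $L_1(\hnu_n)$ (which only makes the right-hand side larger, since $\Psi\subseteq L_1(\hnu_n)$ as the empirical measure is finitely supported), and then close the loop by applying Kantorovich duality once more to recognize
\[
\sup_{\psi\in L_1(\hnu_n)}\left\{\int\psi^c\dif(g_n\#\mu)+\int\psi\dif\hnu_n\right\} = \cT_c(g_n\#\mu,\hnu_n).
\]
Chaining these four inequalities gives the claimed bound. No step is really an obstacle since the assumptions were designed to make each step go through; the only thing that requires minor care is making sure the $c$-transform is computed with respect to the same underlying metric/cost on both sides, and that the approximation assumption is stated with $g_n\#\mu$ in the first slot so that it composes correctly with the generalization step in the second slot.
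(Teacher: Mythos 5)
Your proposal reproduces the paper's own proof exactly: Kantorovich duality, then \cref{eq:cond:apx} to shrink the supremum to $\Psi$ at cost $\eps$, then \cref{eq:cond:gen} applied uniformly over $\Psi$ to pass from $\nu$ to $\hnu_n$ at cost $\BOUND$, then enlarge the supremum to $L_1(\hnu_n)$ and recognize $\cT_c(g_n\#\mu,\hnu_n)$. You also correctly flag the one cosmetic wrinkle --- that the measure in \cref{eq:cond:apx} must be read as $g_n\#\mu$ rather than the raw source $\mu$ --- which the paper's displayed derivation implicitly does.
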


By the earlier discussion, $\BOUND\to 0$ and $\eps\to 0$ as $n\to\infty$, whereas
the third term goes to 0 as discussed in \Cref{sec:optimization}.

 \section{Experimental Results}
\label{sec:exp}

\subsection{Experimental Setup}

We briefly describe the datasets, baselines, and metrics in our
experiments.  Detailed descriptions of network architectures and the
computations of evaluation metrics can be found in the Appendix.

\paragraph{Datasets.} We evaluate our generative model on the MNIST
and Thin-8 $128\times128$ datasets \citep{mnist,thin8}.
On MNIST, we use the original test/train split \citep{mnist},
and each model is
trained on the training set and evaluated on both training and testing sets.
For Thin-8 we use the full dataset for training since the number of samples is
limited. 

\paragraph{Baselines.}
We compare our model against several neural net generative models:
(1) WGAN \citep{WGAN}; 
(2) WGANGP \citep{WGANGP};
(3) variational autoencoder (VAE) \citep{VAE};
(4) Wasserstein autoencoder (WAE) \citep{WAE}.
We experiment with both MLP and DCGAN as the generator
architecture \citep{DCGAN}, and use DCGAN as the default discriminator/encoder architecture
as it achieves better results for these baselines. 
Our method and WAE allow optimizing general optimal transport costs, and we
choose to optimize the Wasserstein-1 distance on the $\ell_1$ metric both for fair
comparison with WGAN, and also since we observed clearer images on both MNIST and Thin-8.

\paragraph{Metrics.} We use the following metrics to quantify the
performance of different generative models: (1) Neural net distance (NND-WC,
NND-GP) \citep{generalization_equilibrium} based on DCGAN with weight clipping
and gradient penalty respectively; (2) Wasserstein-1 distance (WD) on $\ell_1$
metric; (3) Inception score (IS) \citep{improvegan}; and (4) Fr\'echet Inception
distance (FID) \citep{FID}. 

We chose the above metrics because they capture different aspects of a
generative model, and none of them is a one-size-fit-all evaluation measure.
Among them, NND-WC and NND-GP are based on the adversarial game and thus biased
in favor of WGAN and WGANGP. WD measures the Wasserstein distance between the
generated distribution and the real dataset, and favors WAE and our
method. IS and FID can be considered as neutral evaluation metrics, but they
require labeled data or pretrained models to measure the performance of
different models.

\subsection{Qualitative Study}

We first qualitatively investigate our generative model and compare the samples
generated by different models.

\begin{figure*}[t]
	\centering {\
		\subfigure[Real Image] {
			\label{fig:mnist-a}
			\includegraphics[width=0.14\columnwidth]{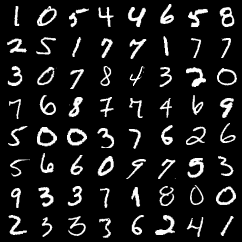}
		}
		\subfigure[Ours-MLP] {
			\label{fig:mnist-b}
			\includegraphics[width=0.14\columnwidth]{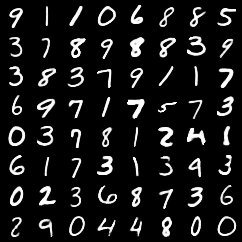}
		}
		\subfigure[Ours-DCGAN] {
			\label{fig:mnist-c}
			\includegraphics[width=0.14\columnwidth]{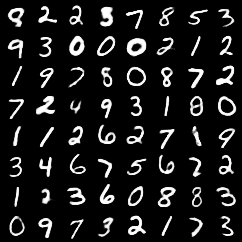}
		}
		\subfigure[WGAN] {
			\label{fig:mnist-d}
			\includegraphics[width=0.14\columnwidth]{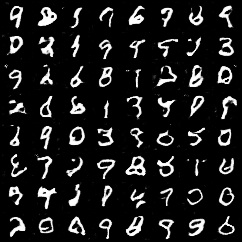}
		}
		\subfigure[WGANGP] {
			\label{fig:mnist-e}
			\includegraphics[width=0.14\columnwidth]{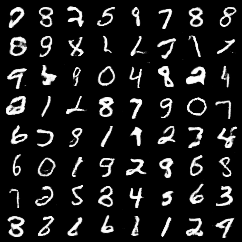}
		}
		\subfigure[WAE] {
			\label{fig:mnist-f}
			\includegraphics[width=0.14\columnwidth]{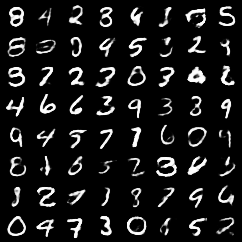}
		}
	}
	\caption{
		Real and generated samples on the MNIST dataset: (a) real samples;
		(b) samples generated by our model with MLP as the generator network; (c)
		samples generated by our model with DCGAN as the generator network; (d)
		samples generated by WGAN; (e) samples generated by WGANGP; (f) samples
		generated by WAE. DCGAN is used as the generator architecture in (d)(e)(f).
	}
	\label{fig:mnist_qualitative}
\end{figure*}

\begin{figure*}[t]
	\centering {\
		\subfigure[\small Ours-MLP] {
			\label{fig:thin8-b}
			\includegraphics[width=0.14\columnwidth]{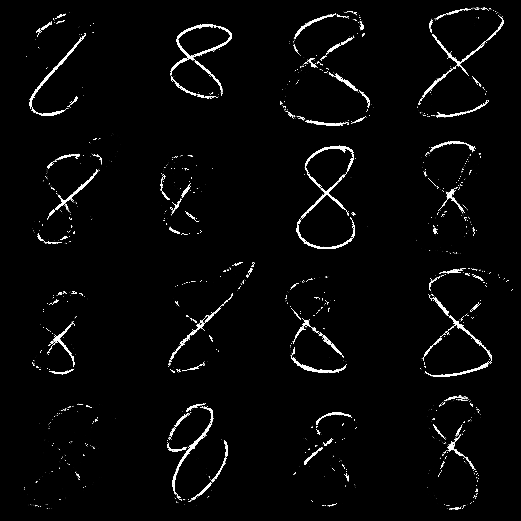}
		}
		\subfigure[\small WGANGP-MLP] {
			\label{fig:thin8-c}
			\includegraphics[width=0.14\columnwidth]{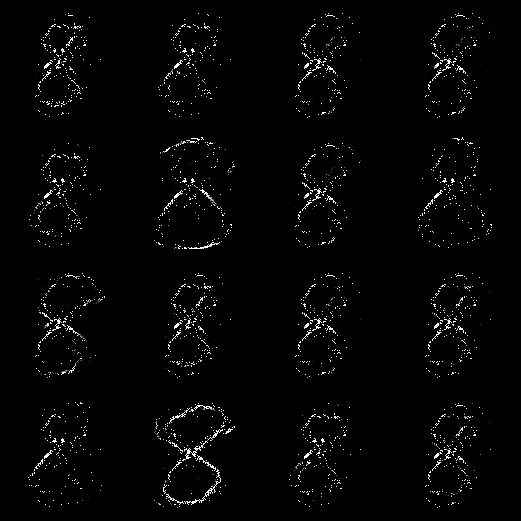}
		}
		\subfigure[\small WAE-MLP] {
			\label{fig:thin8-d}
			\includegraphics[width=0.14\columnwidth]{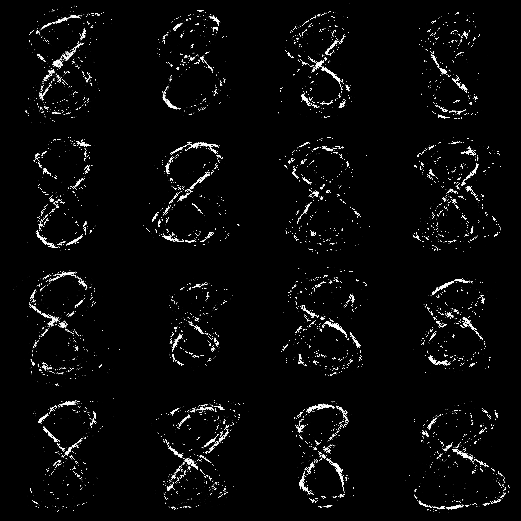}
		}
		\subfigure[\small Ours-DCGAN] {
			\label{fig:thin8-e}
			\includegraphics[width=0.14\columnwidth]{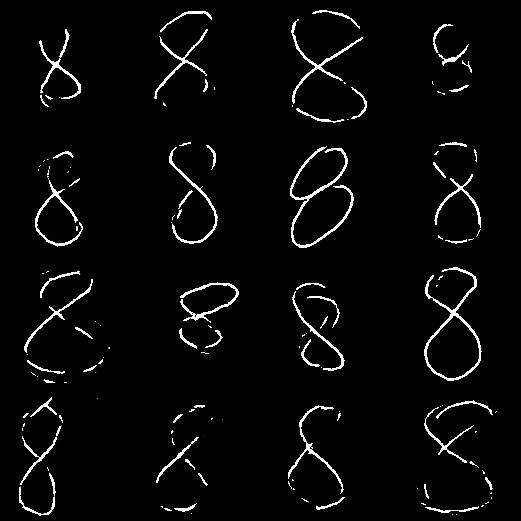}
		}
		\subfigure[\small WGANGP-DCGAN] {
			\label{fig:mnist-f}
			\includegraphics[width=0.14\columnwidth]{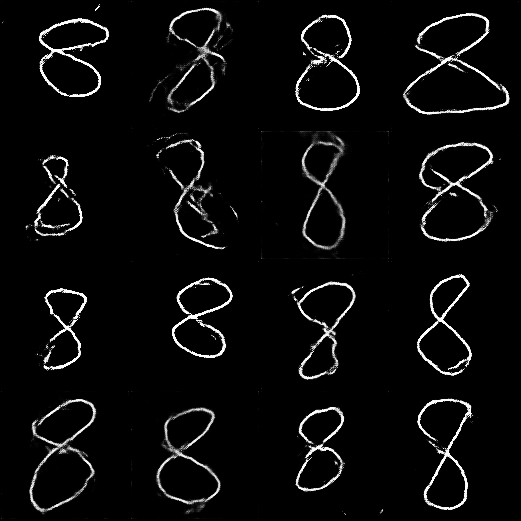}
		}
		\subfigure[\small WAE-DCGAN] {
			\label{fig:mnist-f}
			\includegraphics[width=0.14\columnwidth]{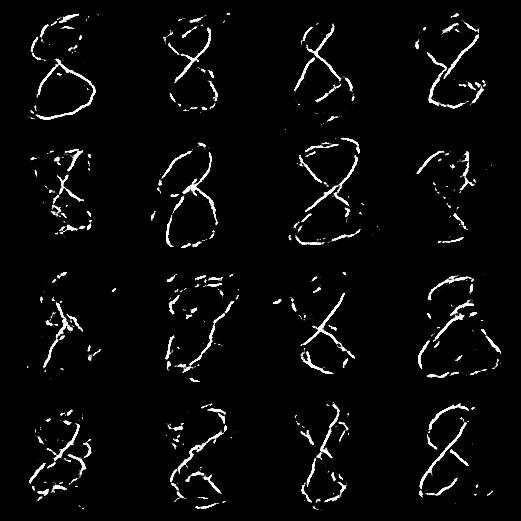}
		}
	}
	\caption{
		Generated samples on the Thin-8 dataset. (a)(b)(c) are samples
		generated by different methods using MLP as the generator; and
		(d)(e)(f) are samples  when using DCGAN as the generator.
	}
	\label{fig:thin8_qualitative}
\end{figure*}

\paragraph{Samples of generated images.} \Cref{fig:mnist_qualitative}
shows samples generated by different models on the MNIST dataset.  The results
show that our method with MLP (\Cref{fig:mnist-b}) and DCGAN
(\Cref{fig:mnist-c}) both generate digits with better visual quality than the
baselines with the DCGAN architecture.  \Cref{fig:thin8_qualitative} shows the
generated samples on Thin-8 by our method, WGANGP, and WAE. The results of
WGAN and VAE are omitted as they are similar to both WGANGP and WAE
consistently on Thin-8.  When MLP is used as the generator architecture, our
method again outperforms WGANGP and WAE in terms of the visual quality of the
generated samples. When DCGAN is used, the digits generated by our method have
slightly worse quality than WGANGP, but better than WAE.

\paragraph{Importance of alternating procedure.} We use this set of
experiments to verify the importance of the alternating procedure that
gradually improves the generative network.  \Cref{fig:compare} shows: (a) the
samples generated by our model; and (b) the samples generated by a weakened
version of our model that does not employ the alternating procedure.  The
non-alternating counterpart derives an optimal transport plan in the first run,
and then fits towards the derived plan. It can be seen clearly the samples
generated with such a non-alternating procedure have considerably lower visual
quality. This verifies the importance of the alternating training procedure: fitting the generator
towards the initial OT plan does not provide good enough gradient direction to produce a
high-quality generator.

\begin{figure}
  \centering {\
    \subfigure[Alternating.] {
      \label{fig:a}
      \includegraphics[width=0.3\columnwidth]{mnist_our_mlp.png}
    }
	\hfil
    \subfigure[Non-alternating.] {
      \label{fig:b}
      \includegraphics[width=0.3\columnwidth]{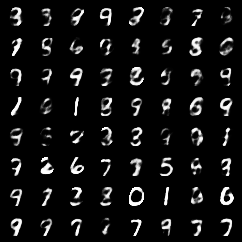}
    }
  }
  \caption{Generated samples on MNIST with and without the alternating procedure.}
  \label{fig:compare}
\end{figure}

\subsection{Quantitative Results}

We proceed to measure the quantitative performance of the compared
models.

\paragraph{MNIST results.} \Cref{tbl:mnist_training} shows the
performance of different models on the MNIST training and testing sets. In the
first part when MLP is used to instantiate generators, our model achieves the
best performance in terms of all the five metrics. The results on neural
network distances (NND-WC and NND-GP) are particularly interesting: even though
neural network distances are biased in favor of GAN-based models because the
adversarial game explicitly optimizes such distances, our model still
outperforms GAN-based models without adversarial training. The second part
shows the results when DCGAN is the generator architecture.  Under this
setting, our method achieves the best results among all the metrics except for
neural network distances.  Comparing the performance of our method on the
training and testing sets, one can observe its consistent performance and
similar comparisons against baselines.  This phenomenon empirically verifies
that our method does not overfit.

\begin{table*}[t]
\caption{Quantitative results on the MNIST training and testing sets. Note
  that the Wasserstein distance on training and testing sets of different sizes
  are not directly comparable.
}
	\begin{center}
          \ifarxiv
            \begin{adjustbox}{width=0.98\textwidth}
          \else
            \begin{small}
          \fi
	\begin{sc}
	\begin{tabular}{c | c | c c c c c | c c c c }
		\toprule
                \textbf{Method} & \textbf{Arch} & \multicolumn{5}{c}{ \textbf{MNIST Training} } & \multicolumn{4}{c}{\textbf{MNIST Test}} \\ \midrule
		 &  &  \textbf{NND-WC} & \textbf{NND-GP} & \textbf{WD} & \textbf{IS} & \textbf{FID} &  \textbf{NND-WC} & \textbf{NND-GP} & \textbf{WD} & \textbf{FID}\\
		\midrule
                WGAN &  & 0.29 & 5.82 & 140.710 & 7.51 & 31.28 & 0.29 & 6.05 & 142.48 & 31.91\\
                WGANGP & & 0.13 & 2.61 & 107.61 & 8.89 & 8.46 & 0.12 & 3.02 & 112.22 & 8.99\\
		VAE & MLP &0.53 & 4.26 & 101.06 & 7.10 & 52.42 &0.52 & 4.42 & 110.49 & 51.88\\
                WAE & &0.18 & 3.64 & 90.91 & 8.42& 11.12 &0.15 & 3.80 & 101.46 & 11.49\\
		Ours & &\textbf{0.11} & \textbf{2.56} & \textbf{66.68} & \textbf{9.77}& \textbf{3.21}&\textbf{0.10} & \textbf{2.79} & \textbf{82.87} &  \textbf{3.56}\\
		\midrule
		WGAN & & 0.11 & 4.69 & 125.63 & 7.02& 27.64& 0.10 & 4.86 & 132.97 & 28.44\\
                WGANGP & & \textbf{0.08} & \textbf{0.83} & 93.61 & 8.65 & 4.65& \textbf{0.07} & \textbf{1.66} & 104.15 & 5.45\\
		VAE & DCGAN & 0.48 & 3.68 & 106.63 & 6.96& 42.10& 0.46 & 3.89 & 115.59 & 41.95\\
		WAE & & 0.18 & 3.29 & 90.96 & 8.35& 12.28& 0.15 & 3.53 & 101.02 & 12.66\\
		Ours & & 0.10& 2.28& \textbf{70.13}& \textbf{9.54} & \textbf{3.76}& 0.09& 2.55& \textbf{82.79}& \textbf{4.18}\\
		\bottomrule
	\end{tabular}
	\end{sc}
        \ifarxiv
          \end{adjustbox}
        \else
          \end{small}
        \fi
\end{center}
\label{tbl:mnist_training}
\end{table*}

\paragraph{Thin-8 results.} There are no meaningful classifiers to compute IS and
FID on the Thin-8 dataset. We thus only use NND-WC, NND-GP and WD as the
quantitative metrics, and \Cref{tbl:thin8} shows the results. Our method
obtains the best results among all the metrics with both the MLP and DCGAN
architectures. For NND-WC, all methods expect ours have similar results of
around 3.1: we suspect this is due to the weight clipping effect, which is
verified by tuning the clipping factor in our exploration. NND-GP and WD have
consistent correlations for all the methods. This phenomenon is expected on a
small-sized but high-dimensional dataset like Thin-8, because the discriminator
neural network has enough capacity to approximate the Lipschitz-1 function class on
the samples.  The result comparison between NND-WC and NND-GP directly supports
the claim \cite{WGANGP} that gradient-penalized neural networks (NND-GP) has
much higher approximation power than weight-clipped neural networks (NND-WC).

It is interesting to see that WGAN and WGANGP lead to the largest neural net
distance and Wasserstein distance, yet their generated samples still have the
best visual qualities on Thin-8.  This suggests that the success of GAN-based
models cannot be solely explained by the restricted approximation power of
discriminator \citep{generalization_equilibrium, thin8}.

\begin{table}[h]
        \caption{Quantitative results on the Thin-8 dataset.}
	\begin{center}
		\begin{small}
			\begin{sc}
				\begin{tabular}{c | c | c c c}
					\toprule
					\textbf{Method} & \textbf{Arch} &  \textbf{NND-WC} & \textbf{NND-GP} & \textbf{WD} \\
					\midrule
					WGAN &  & 3.12 & 258.05 & 3934\\
					WGANGP & & 3.12 & 144.38 & 2235\\
					VAE & MLP &3.11 & 105.22 & 1950\\
					WAE & &3.07 & 111.79 & 1945 \\
                                        Ours & &\textbf{2.87} & \textbf{80.48} & \textbf{1016}\\
					\midrule
					WGAN & & 3.10 & 157.84 & 2481\\
					WGANGP & & 3.04 & 79.47 &1909 \\
					VAE & DCGAN & 3.02 & 81.38 & 1820\\
					WAE & & 3.11 & 88.04 & 1950\\
                                        Ours & &\textbf{2.92} & \textbf{47.59}& \textbf{923}\\
					\bottomrule
				\end{tabular}
			\end{sc}
		\end{small}
	\end{center}
	\label{tbl:thin8}
\end{table}

\paragraph{Time cost.} \Cref{tab:time} reports the training time of
different models on MNIST.  For moderate sized datasets such as MNIST, our
method is faster than WGAN and WGANGP but slower than VAE and WAE.  Compared
with GAN-based models, our method does not have a discriminator which saves time.  On the other
hand, the loss function of \cref{eq:ots_loss} requires computing
$c(x,y_i)-\hpsi_i$ for the whole dataset, which can be costly. That being said,
a useful trick to accelerate our model is to take gradient of $\hpsi$ over a moderately-sized
subsample (for example 1\%) of dataset in the first iterations, then gradually increase the
subsample data size to cover the whole dataset.

\begin{table}
\caption{Training time per iteration on MNIST.}
  \begin{center}
    \begin{small}
    	\begin{sc}
      \begin{tabular}{c | ccccc}
        \toprule
        \textbf{Method} & WGAN   & WGANGP & VAE   & WAE   & Ours   \\ \midrule
        \textbf{Time (ms)} & 26.17 & 47.03 & 7.38 & 7.22 & 11.08 \\
        \bottomrule
      \end{tabular}
  \end{sc}
    \end{small}
  \end{center}
\label{tab:time}
\end{table}

 \section{Discussion of Limitations}
\label{sec:discuss}

We have also run our method on the CelebA and CIFAR10
datasets \cite{celeba,CIFAR10}.  On CelebA, our method generates clear faces with good
visual quality and with meaningful latent space interpolation, as shown in
\Cref{fig:celeba_our} and \Cref{fig:celeba_interp}. However, we observe that
the good visual quality partly comes from the average face effect: the
expressions and backgrounds of generated images lack diversity compared with
GAN-based methods.

\begin{figure}[t]
	\centering {\
		\subfigure[Generated samples] {
			\label{fig:celeba_our}
			\includegraphics[width=0.3\columnwidth]{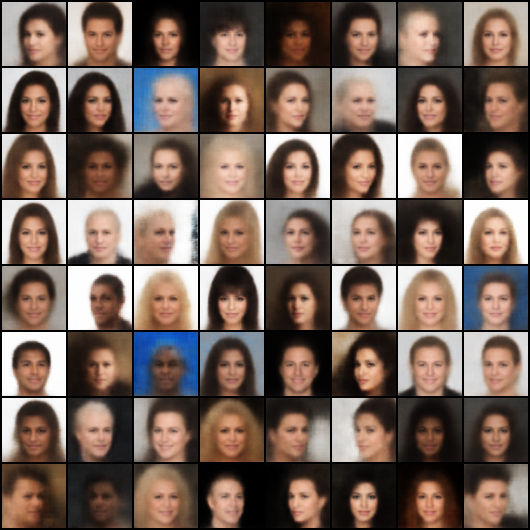}
		}
		\hfil
		\subfigure[Latent space walk] {
			\label{fig:celeba_interp}
			\includegraphics[width=0.3\columnwidth]{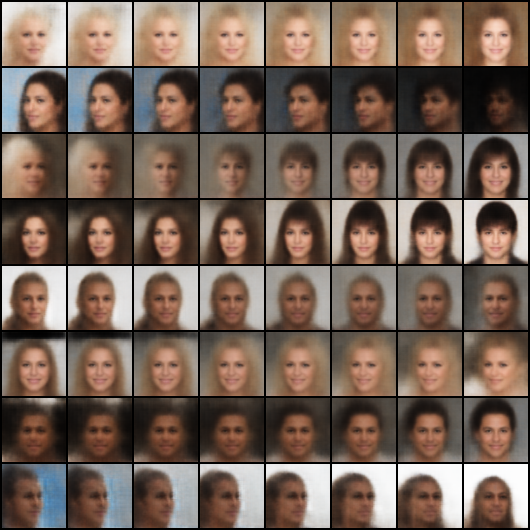}
		}
	}
	\caption{
          Samples generated by our method on CelebA, and a latent space
          interpolation.
	}
	\label{fig:celeba_qualitative}
\end{figure}

\Cref{fig:cifar10_qualitative} shows the results of our method on CIFAR10. As
shown, our method generates identifiable objects, but they are more blurry than
GAN-based models. VAE generates objects that are also blurry but less
identifiable.  We compute the Wasserstein-1 distance of the compared methods on
CIFAR10: our method (655), WGAN-GP (849) and VAE (745).  Our method achieves
the lowest Wasserstein distance but does not have better visual quality than
GAN-based models on CIFAR10.

\begin{figure}[t]
	\centering {\
		\subfigure[Our method] {
			\label{fig:cifar_our}
			\includegraphics[width=0.3\columnwidth]{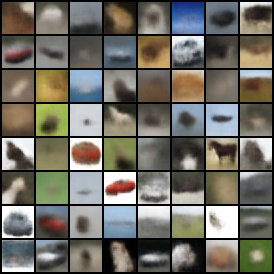}
		}
	\hfil
	\subfigure[VAE] {
		\label{fig:cifar_vae}
		\includegraphics[width=0.3\columnwidth]{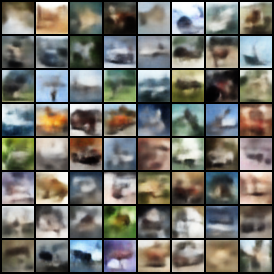}
	}
}
\caption{Samples generated by our method and VAE on CIFAR10.
	}
	\label{fig:cifar10_qualitative}
\end{figure}

Analyzing these results, we conjecture that minimizing Wasserstein distances on
pixel-wise metrics such as $\ell_1$ and $\ell_2$ leads to a mode-collapse-free
regularization effect. For models that minimize the Wasserstein distance, the
primary task inherently tends to cover all the modes disregarding the sharpness
of the generated samples. This is because not covering all the modes will
result in huge transport cost.  In GANs, the primary task is to generate sharp
images which can fool the discriminator, and some modes can be dropped towards
this goal. Consequently,  the objective of our model naturally prevents it from
mode collapse, but at the cost of generating more blurry samples. We propose
two potential remedies to the blurriness issue: one is to use a perceptual loss
\citep{GLO}; and the other is to incorporate adversarial metric into the
framework.  We leave them as future work.

 \section{Related Work}
\label{sec:related_work}

\paragraph{Optimal Transport.} Optimal transport is an old yet vibrant topic \citep{villani_1,
  villani_2, cuturi_book}. \citet{cuturi_stochastic_ot} give the stochastic formulation of
semi-discrete optimal transport used in our solver. \citet{geometric_ot}
provide a geometric view of semi-discrete optimal transport. \citet{seguy}
propose to parameterize the Kantorovich potential via neural networks. They also propose to train
generative networks by fitting towards the optimal transport plan between latent
code and data, which can be considered as a special case of the non-alternating
procedure we discussed earlier.

\paragraph{Generative models and OT.} Combining generative modeling and optimal transport has been studied extensively.
One line of research comes from the dual representation of Wasserstein
distance via the Kantorovich-Rubinstein equality \citep{WGAN, WGANGP}.  Another
line optimizes the primal form of Wasserstein distance by relaxing it to a
penalized form \citep{VEGAN, WAE}. \citet{cuturi_GAN_VAE} give a 
comparison of WGAN and WAE from the view of optimal transport. 
By contrast, our work evaluate Wasserstein distance in dual space, and then optimizes it using its primal form.

The idea of computing optimal transport between batches of generated and real
samples has been used in both non-adversarial generative modeling \citep{sinkhorn_autodiff, xie},
as well as adversarial generative modeling \citep{OTGAN, WGANTS}. However, minimizing
batch-to-batch transport distance does not lead to the minimization of the
Wasserstein distance between the generated and target distributions
\citep{cramergan}.  Instead, our method computes the whole-to-whole optimal
transport via the semi-discrete formulation.

\paragraph{Generalization properties of Wasserstein distance.}
\citet{SriperumbudurFGSL2012} analyze the sample complexity of evaluating integral probability metrics.
\citet{generalization_equilibrium} show that KL-divergence and Wasserstein
distances do not generalize well in high dimensions, but their neural net distance
counterparts do generalize. \citet{thin8} give reasons for the advantage of GAN
over VAE, and collects the Thin-8 dataset to demonstrate the advantage of GANs,
which is used in our experiments. In this work we have compared both our theoretical
and empirical findings with those in \citep{generalization_equilibrium, thin8}.
 \section{Conclusion and Future Work}
\label{sec:conclusion}

We have proposed a simple alternating procedure to generative modeling by
explicitly optimizing the Wasserstein distance between the generated distribution
and real data. We show theoretically and empirically that our method does
optimize Wasserstein distance to the training dataset, and generalizes to the
underlying distribution.  Interesting future work includes combining our method
with adversarial and perceptual losses, and theoretical analysis on how gradual
fitting contributes to smoother manifolds and better generalization.

\subsection*{Acknowledgements}

BB and MJT are grateful for support from the NSF under grant IIS-1750051.
DH acknowledges support from the Sloan Foundation. JP acknowledges support from Tencent AI Lab Rhino-Bird Gift Fund.
 
\bibliography{bib}
\bibliographystyle{plainnat}

\begin{appendices}
      \section{Proof of \Cref{prop:argmin}}

\begin{proof}
	By \citep[Theorem 2.44]{villani_1}, if $T(x)$ can be uniquely determined by $T(x) = x - \nabla c^*(\nabla\varphi(x))$, where $\varphi$ is defined in \cref{eq:kantorovich_duality}, then $T(x)$ is the unique Monge-Kantorovich optimal transference plan.
        Defining $m := \argmin_i c(x - y_i) - \hpsi_i$ for convenience, we have
	\begin{align*}
	T(x) &= x - \nabla c^*(\nabla\varphi(x))\\
	\Longrightarrow\qquad x - T(x) 
	&= \nabla c^*(\nabla\varphi(x))\\
	&= \nabla c^*(\nabla_x \min_i c(x - y_i) - \hpsi_i)\\
	&= \nabla c^* \nabla_x (c( x-y_m)-\hpsi_m)\\
	&= \nabla c^* \nabla c(x-y_m)\\
	&=x - y_m\\
	\Longrightarrow\qquad T(x) &= y_m\\
	&=y_{\argmin_i c(x - y_i)-\hpsi_i}.
	\end{align*}
\end{proof}

\section{An Example of Non-Gradual Training}

We present in \Cref{fig:gradual_bad} a similar synthetic example to \Cref{fig:gradual:2}, except that FIT step will iterate itself until stuck at a local optimum. The local optimum after the first OTS-FIT run is a zig-zag shaped curve which does not generalize. While alternating the OTS-FIT is able to further push the generated samples to targets, the learned manifold is not as smooth as in \Cref{fig:gradual:2}.

\begin{figure}[h]
	\centering
\begin{tikzpicture}
	\tikzset{node style/.style={midway,above,sloped}};
	\node[inner sep=0pt] (f1) at (0,0)
	{\includegraphics[width=.15\textwidth]{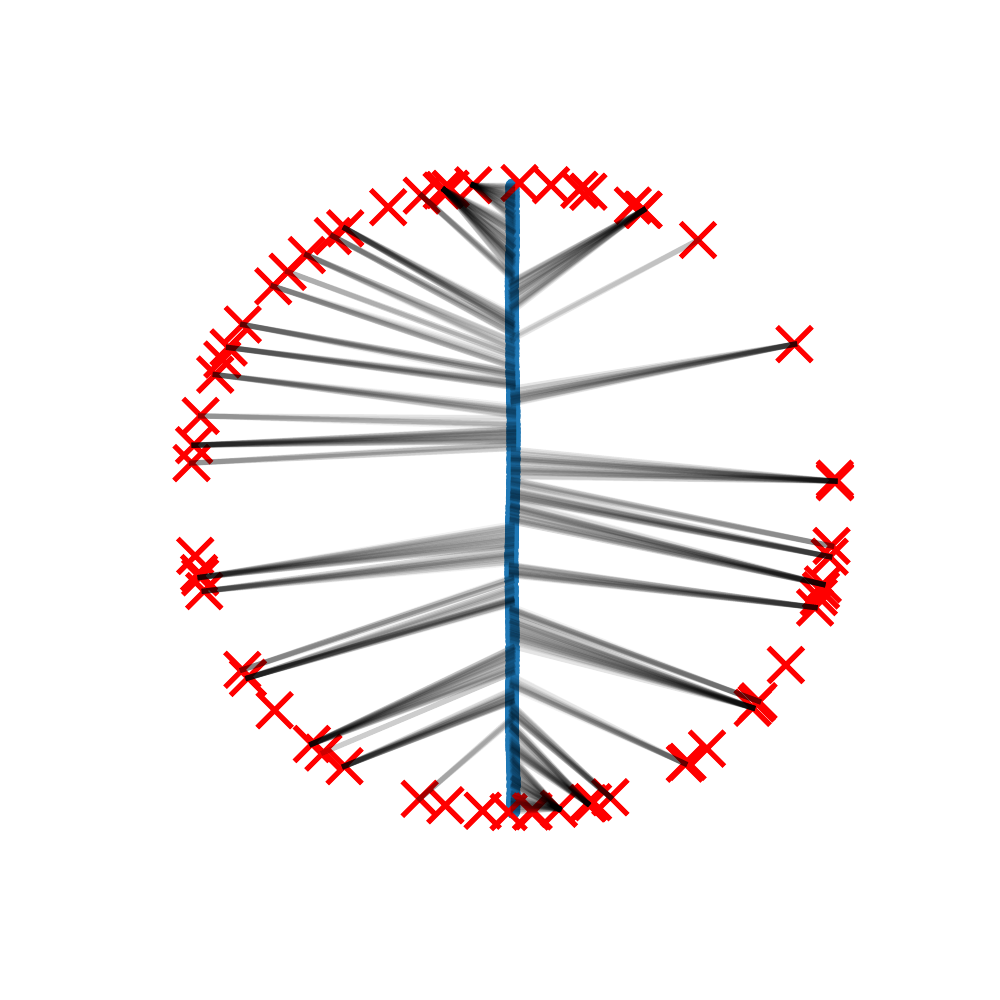}};
	\node[inner sep=0pt] (f2) at (4,0)
	{\includegraphics[width=.15\textwidth]{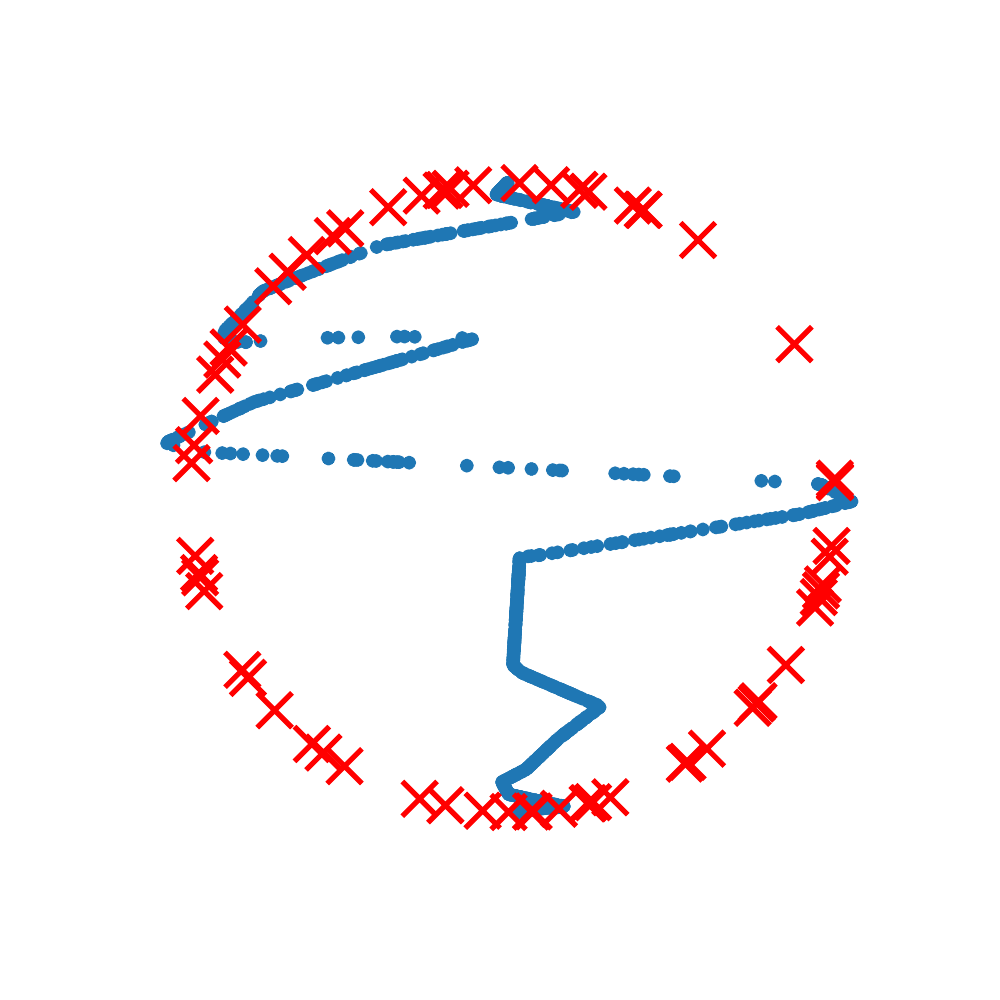}};
	\node[inner sep=0pt] (f3) at (0,-4)
	{\includegraphics[width=.15\textwidth]{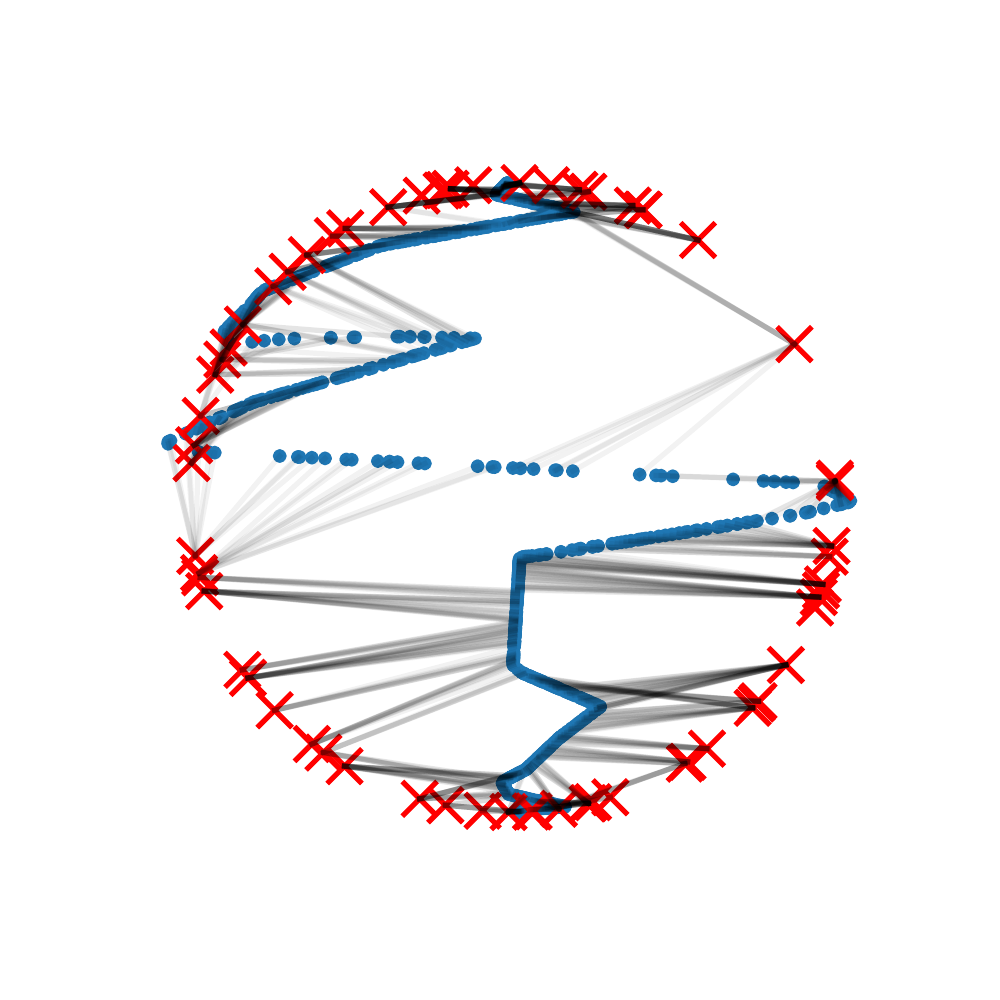}};
	\node[inner sep=0pt] (f4) at (4,-4)
	{\includegraphics[width=.15\textwidth]{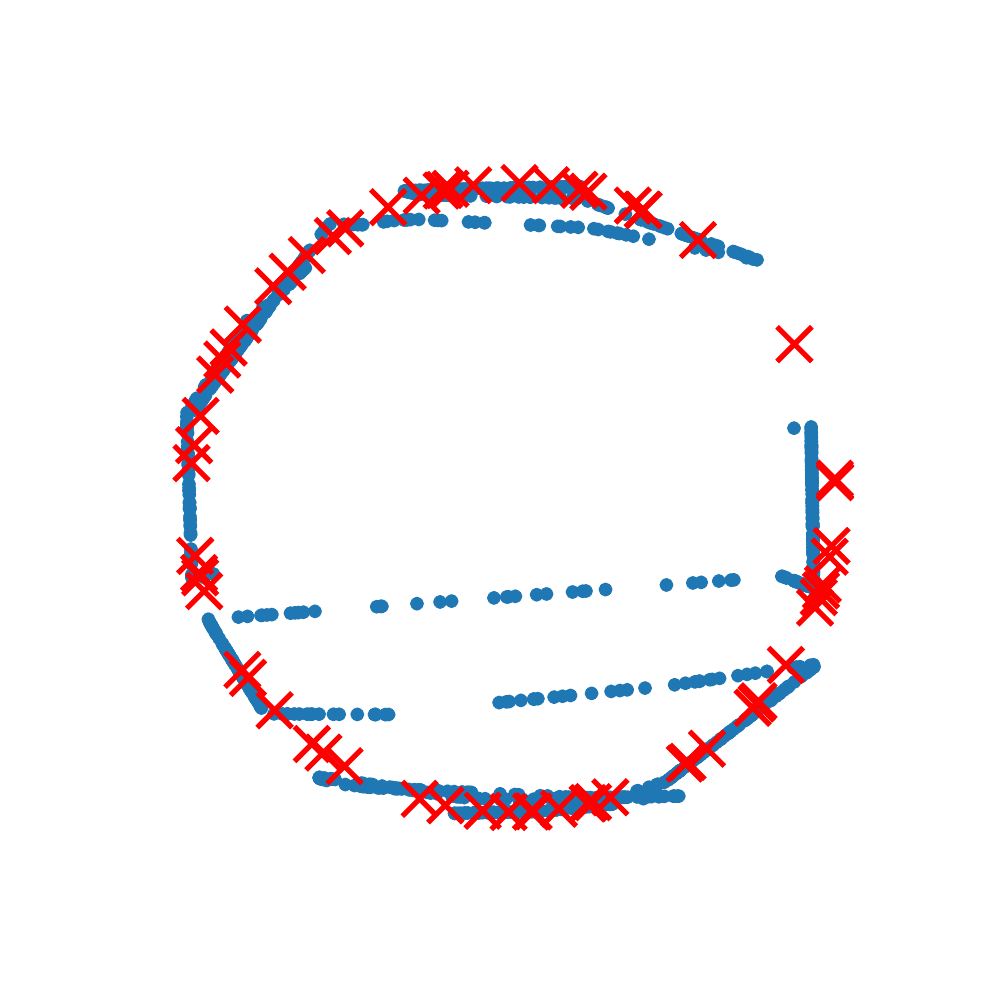}};
	\node[inner sep=0pt] (f5) at (8,0)
	{\includegraphics[width=.15\textwidth]{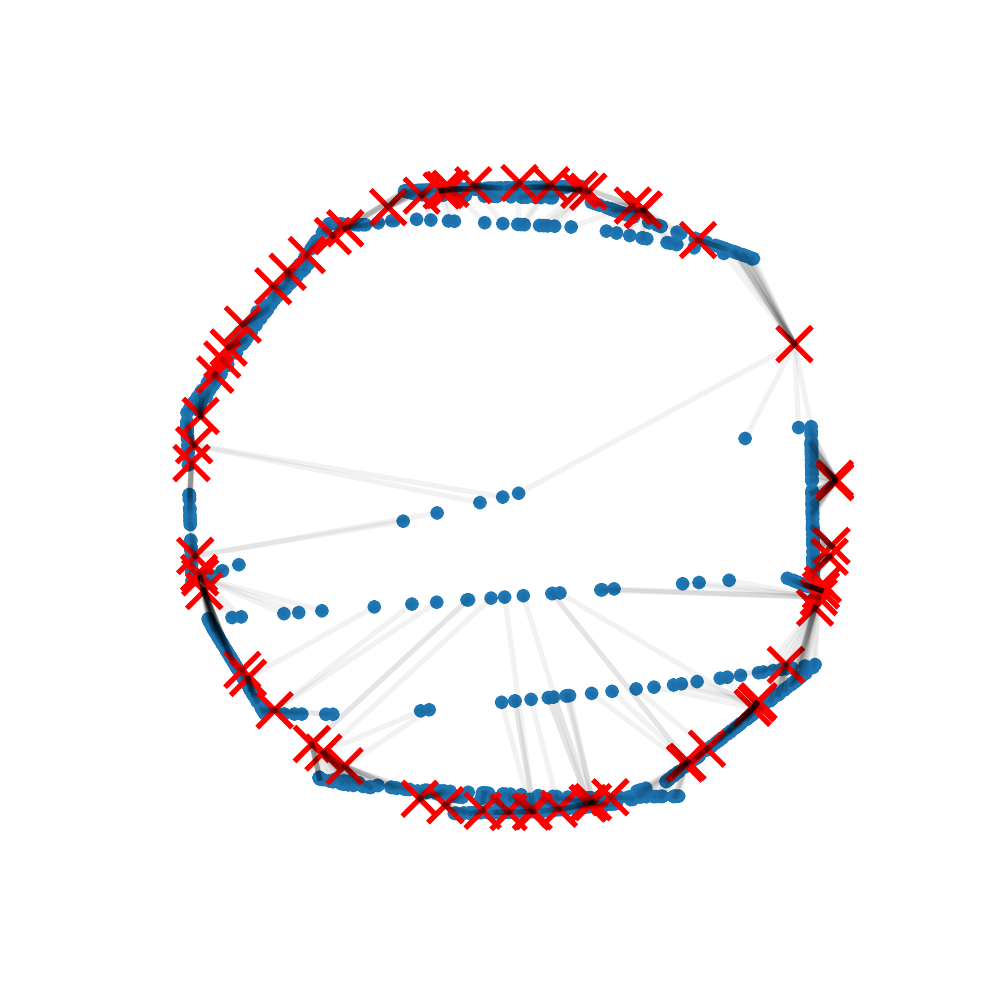}};
	\node[inner sep=0pt] (f6) at (12,0)
	{\includegraphics[width=.15\textwidth]{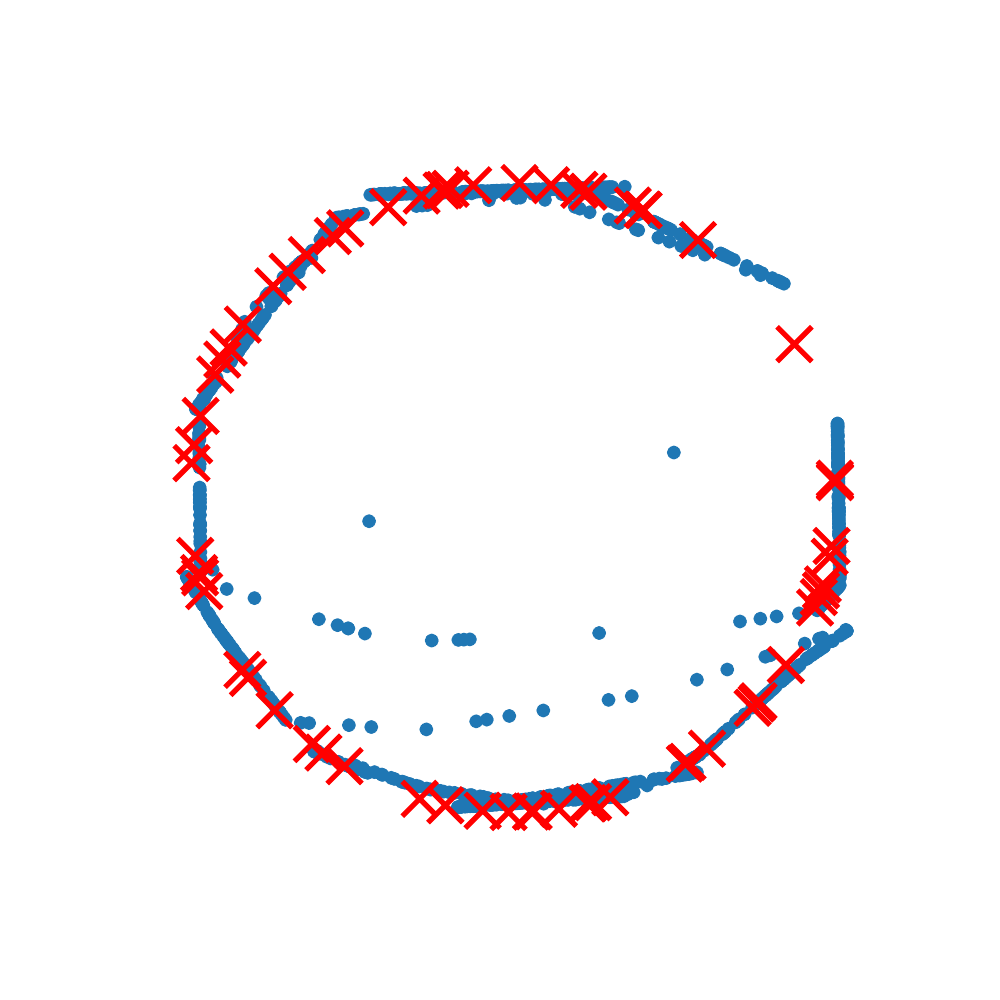}};
	\node[inner sep=0pt] (f7) at (8,-4)
	{\includegraphics[width=.15\textwidth]{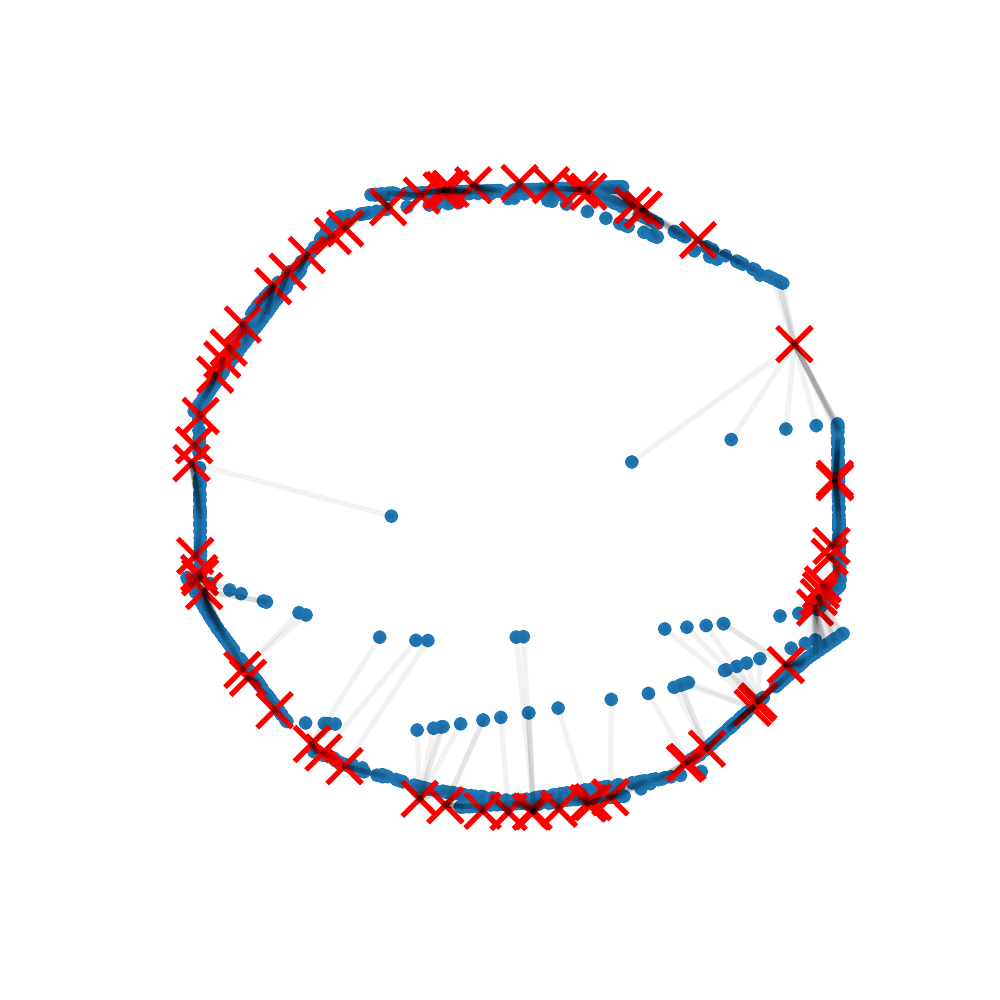}};
	\node[inner sep=0pt] (f8) at (12,-4)
	{\includegraphics[width=.15\textwidth]{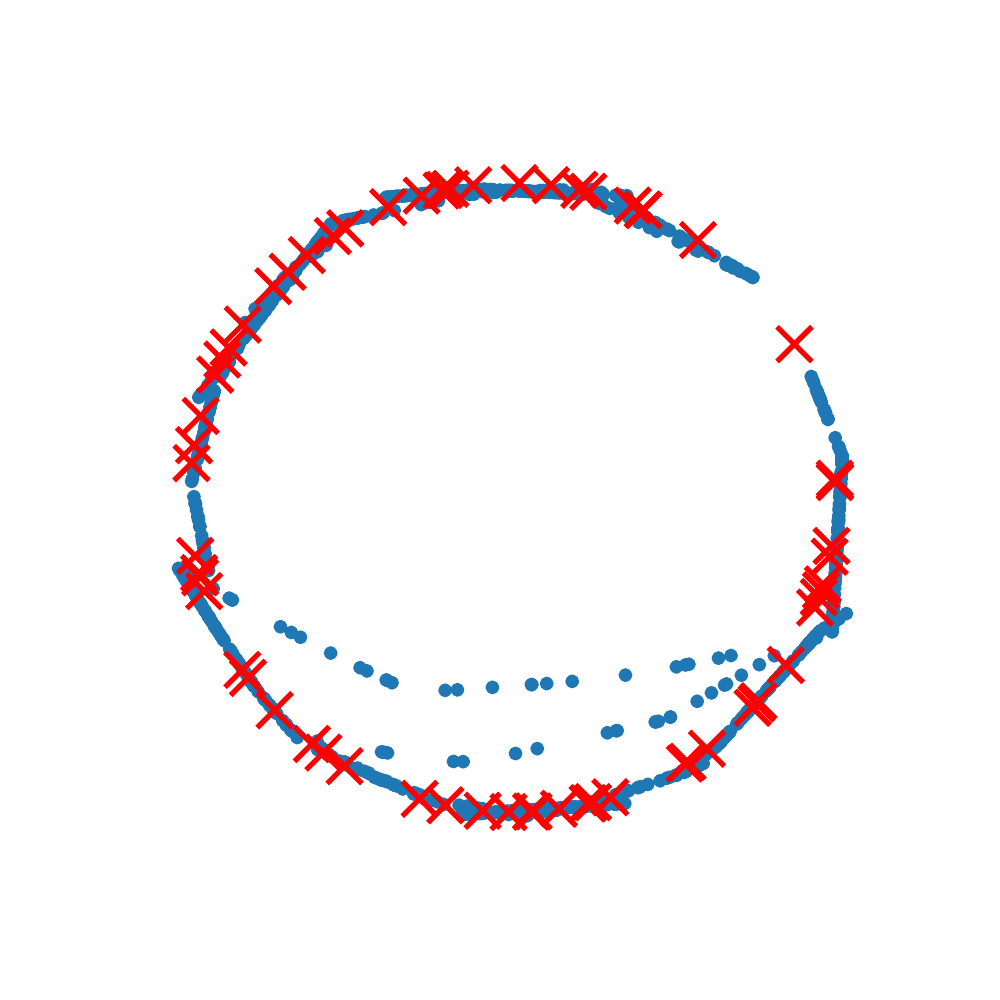}};
	\draw[->,ultra thick] (f1.east) -- (f2.west)
	node[node style] {FIT};
	\draw[->,dashed, ultra thick] (f2.south west) -- (f3.north east)
	node[node style] {OTS};
	\draw[->,ultra thick] (f3.east) -- (f4.west)
	node[node style,] {FIT};
	\draw[->,dashed, ultra thick] (f4.north east) -- (f5.south west)
	node[node style,] {OTS};
	\draw[->,ultra thick] (f5.east) -- (f6.west)
	node[node style,] {FIT};
	\draw[->,dashed, ultra thick] (f6.south west) -- (f7.north east)
	node[node style,] {OTS};
	\draw[->,ultra thick] (f7.east) -- (f8.west)
	node[node style,] {FIT};
	\end{tikzpicture}
	\caption{Example of non-gradual training.}
	\label{fig:gradual_bad}
\end{figure}

\section{Further Discussion on the Uniqueness of OT Plan}

Continuity of $g\#\mu$ is required in \citep[Theorem 2.44]{villani_1}, which is violated if $g\#\mu$ lies in a low-dimensional manifold in $\R^d$, or the activation function is not strictly increasing (such as ReLU). When $g\#\mu$ is not continuous, the optimal transference plan is not unique in general, and moreover it is possible that $T(x):=y_{\argmin_i c(x,y_i)-\hpsi_i}$ is not a transference plan at all (\cref{fig:uniqueness_a}).

However, as discussed in \Cref{prop:argmin}, the only condition in which the ``argmin form'' does not characterize a unique Monge OT plan is the existence of ties. To break the ties, one can add arbitrarily small $d$-dimensional Gaussian noise to the outputs of $g$ (\cref{fig:uniqueness_b}).  Furthermore, this makes the distribution continuous and satisfies the conditions of \citep[Theorem 2.44]{villani_1}. On the other hand, supposing that the dataset $\hnu$ is drawn from a continuous distribution in $\R^d$, the event that two samples achieves exactly the same minimum $\min_i c(x,y_i)-\hpsi_i$ happens with probability zero (\cref{fig:uniqueness_c}).

\begin{figure}[h]
	\centering {\
		\subfigure[Example of non-uniqueness.] {
			\label{fig:uniqueness_a}
			\includegraphics[width=0.3\columnwidth]{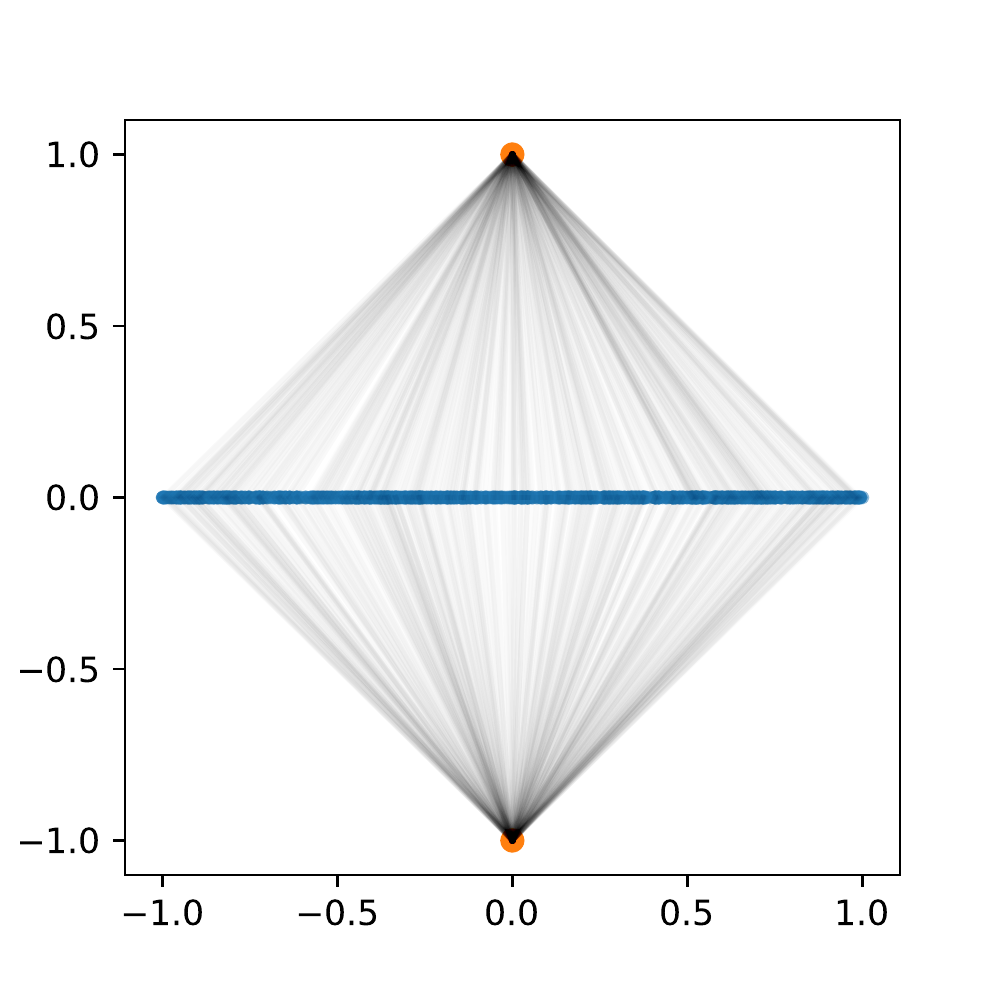}
		}
	}
	\centering {\
		\subfigure[Any perturbation breaks the ties.] {
			\label{fig:uniqueness_b}
			\includegraphics[width=0.3\columnwidth]{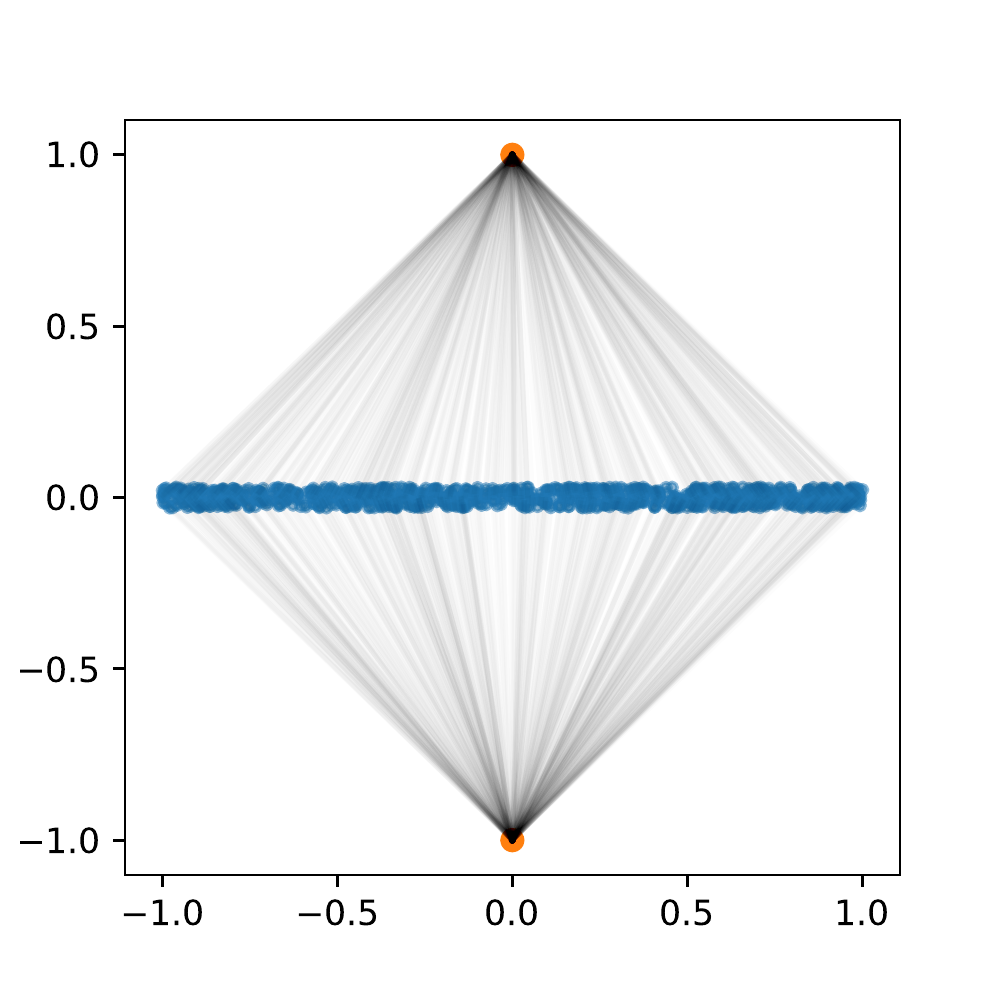}
		}
	}
	\centering {\
		\subfigure[When the samples are shifted.] {
			\label{fig:uniqueness_c}
			\includegraphics[width=0.3\columnwidth]{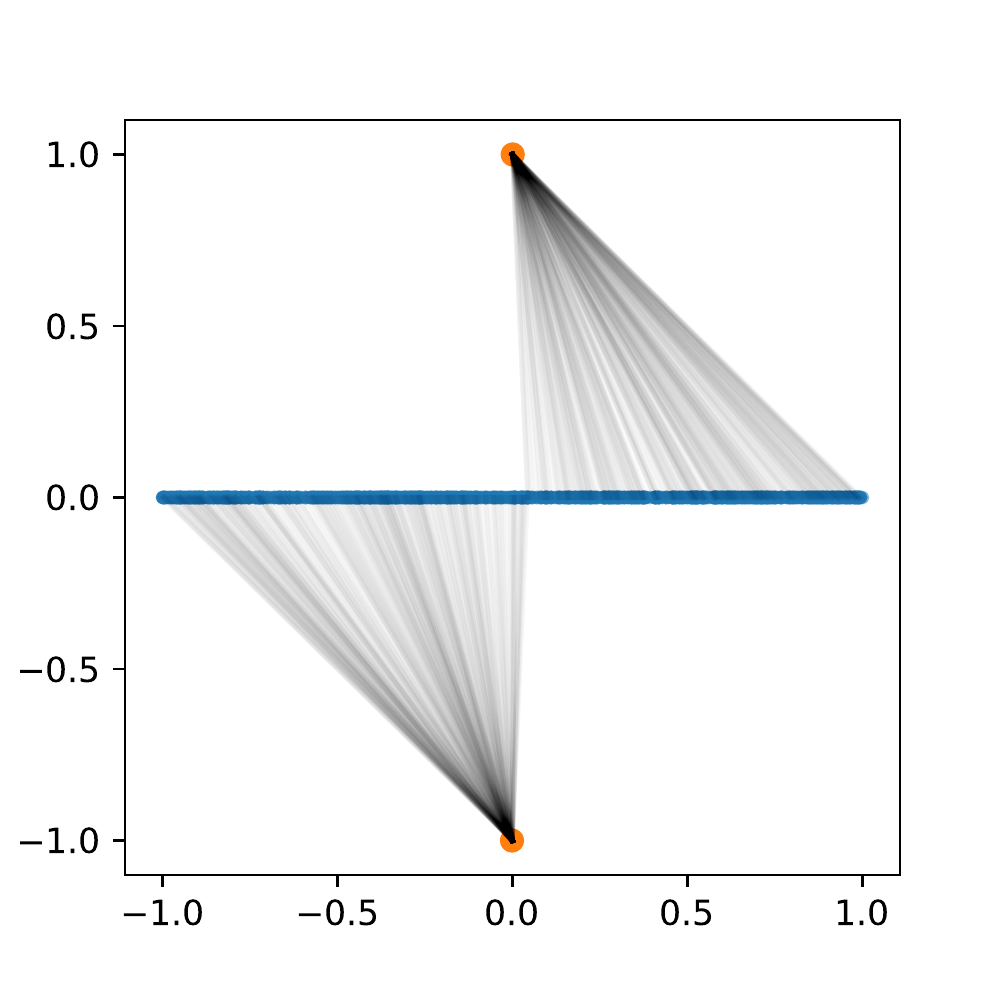}
		}
	}
	\caption{(a) Suppose $g\#\mu$ is a uniform distribution on $[-1,1] \times \{0\}$, and the samples are $y_1(0, 1), y_2(0,-1)$, then any equal-sized partition of $[-1,1]$ would result to an optimal Monge plan. (b) Adding a small perturbation would make the optimal Monge plan unique. (c) If the samples become $y_1(0.001, 1), y_2(-0.001,-1)$, the ties are also broken.
	}
	\label{fig:uniqueness}
\end{figure}

\section{Verifying the Assumption in \Cref{sect:generalization}}

We train $\hpsi$ between our fitted generating distribution $g\#\mu$ and MNIST full dataset $\hnu$, then fit an MLP $\psi$ with 4 hidden layers of 512 neurons to $\hpsi$ on training dataset $\hnu_\text{train}$, and evaluated on both $\hnu_\text{train}$ and test dataset $\hnu_\text{test}$ ($\psi$ takes 784-dimensional images as input and outputs a scalar). \Cref{fig:verify} shows that the training error goes to zero and $\psi$ has almost the same value as $\hpsi$ when evaluated on $\hnu$.

\begin{figure}[h]
	\centering {\
		\subfigure[L2 loss during training] {
			\label{fig:verify_a}
			\includegraphics[width=0.3\columnwidth]{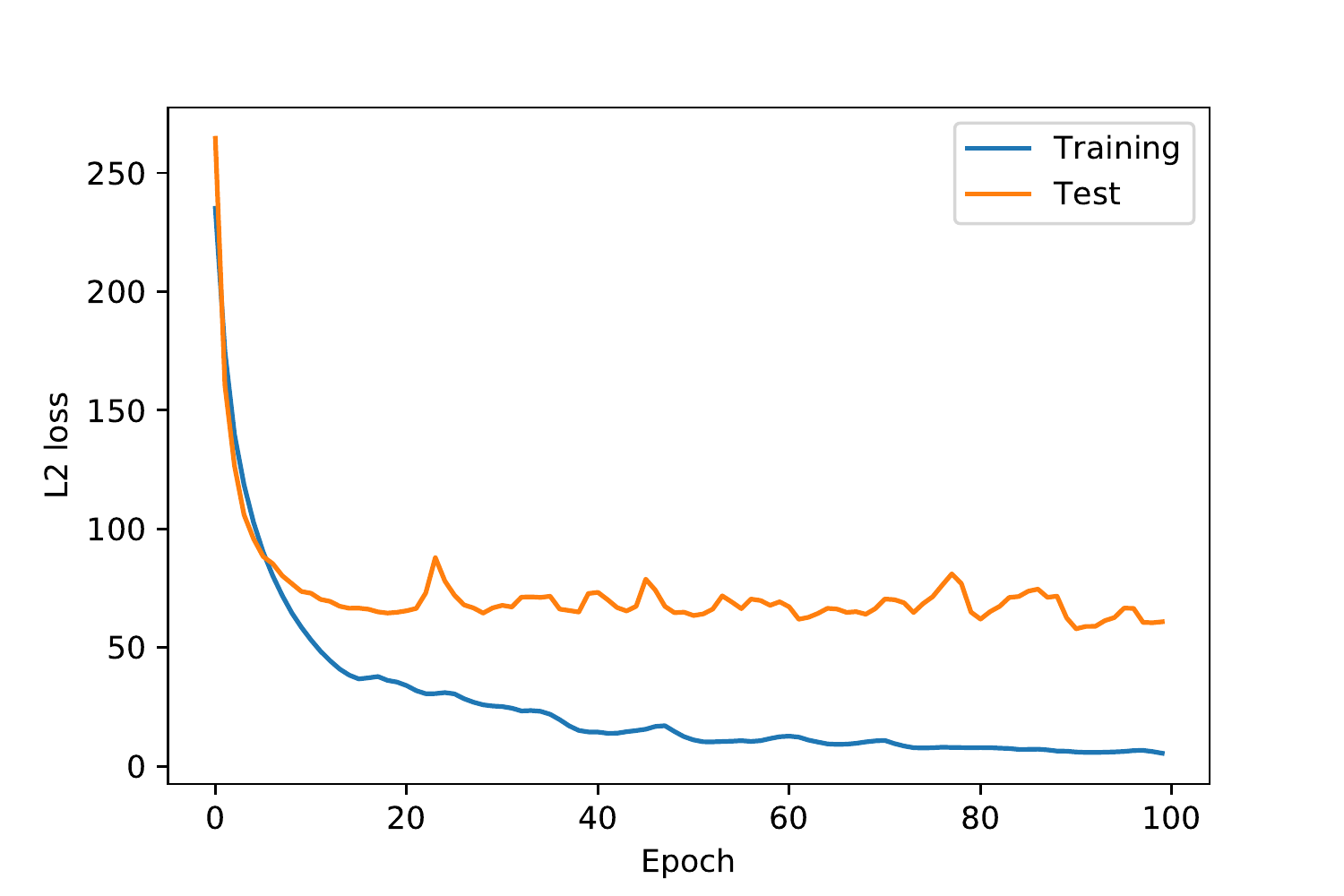}
		}
	}
	\centering {\
		\subfigure[$\hpsi$ vs fitted $\psi$ on 100 training samples] {
			\label{fig:verify_b}
			\includegraphics[width=0.3\columnwidth]{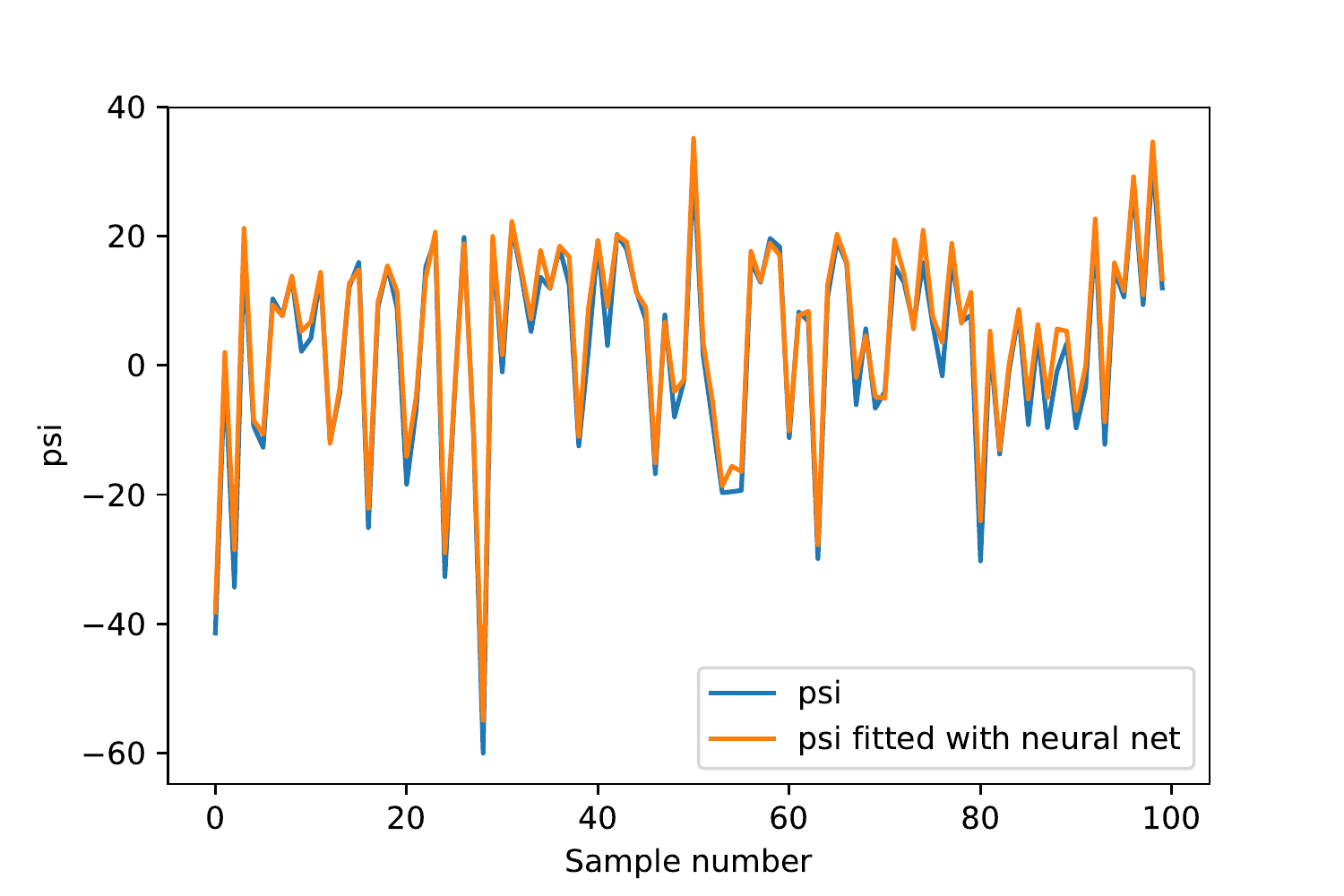}
		}
	}
	\centering {\
		\subfigure[$\hpsi$ vs fitted $\psi$ on 100 test samples] {
			\label{fig:verify_c}
			\includegraphics[width=0.3\columnwidth]{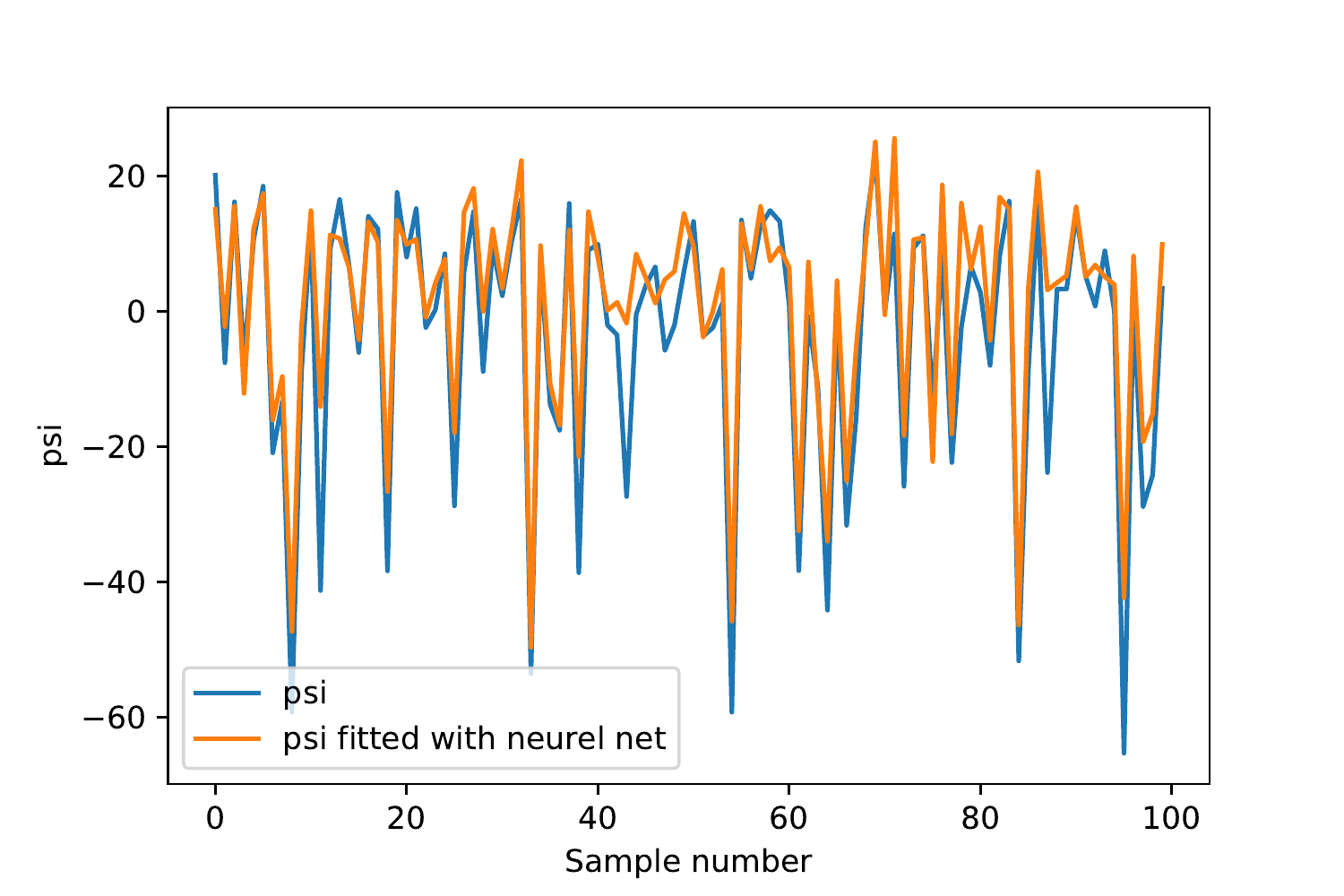}
		}
	}
	\caption{$\hpsi$ on MNIST fitted by neural network.
	}
	\label{fig:verify}
\end{figure}

We point out some concrete cases where the exponential lower bound is defeated by introducing a dependence on the measures $\mu$ and $\nu$, without such empirical verification. If $\mu$ and $\nu$ are both piecewise constant densities with few pieces, then the supremum in \Cref{eq:cond:apx} can be well-approximated with piecewise affine functions with few pieces. In the worst case, the number of pieces can be exponential in dimension, but we will depend explicitly on the measure and its properties.

Another way to achieve the generalization result without the verification process, is to parametrize $\psi$ as a neural network in the optimal transport algorithm, an idea related to \citep{seguy}. In \Cref{alg:ots}, we choose to optimize the vectorized $\hpsi$ since it is a convex programming formulation guaranteed to converge to global minimum.

\section{Experimental Details}

\subsection{Evaluation Metrics}
\begin{description}
	\item[\textbf{Neural net distance (NND-WC, NND-GP).}]
	\cite{generalization_equilibrium} define the \emph{neural net distance}
	between the generated distribution $g\#\mu$ and dataset $\nu$ as:
	$$\mathcal{D}_{\mathcal{F}}(g\#\mu, \nu) :=
	\sup_{f\in\mathcal{F}}\mathbb{E}_{x\sim g\#\mu}f(x) -
	\mathbb{E}_{y\sim\nu}f(y),$$ where $\mathcal{F}$ is a neural network
	function class. We use DCGAN with weight clipping at $\pm 0.01$, and DCGAN
	with gradient penalty with $\lambda=10$ as two choices of $\mathcal{F}$. We
	call the corresponding neural net distances NND-WC and NND-GP respectively.
	
	\item[\textbf{Wasserstein-1 distance (WD).}] This refers to the exact Wasserstein distance
	on $\ell_1$ metric between the generated distribution $\mu$ and dataset
	$\nu$, computed with our \Cref{alg:ots}.
	
	\item[\textbf{Inception score (IS).}] \cite{improvegan} assume there exists a
	pretrained external classifier outputing label distribution $p(y|x)$ given
	sample $x$. The score is defined as $\text{IS}_p(\mu) :=
	\exp\{\mathbb{E}_{x\sim\mu}\text{KL}(p(y|x) \;||\; p(y))\}$. 
	
	\item[\textbf{Fr\'echet Inception distance (FID).}] \citet{FID} give this
	improvement over IS, which compares generated and real samples by the
	activations of a certain layer in a pretrained classifier. Assuming the
	activations follow Multivariate Gaussian distribution of mean $\mu_g,\mu_r$
	and covariance $\Sigma_g, \Sigma_r$, FID is defined as: 
	$$
	\text{FID}(\mu_g, \mu_r, \Sigma_g, \Sigma_r)\\
	:= \|\mu_g-\mu_r\|^2 + \text{Tr}(\Sigma_r + \Sigma_g - 2(\Sigma_r\Sigma_g)^{1/2}).
	$$
\end{description}

Note that the naming of IS and FID is used since the classifier is an Inception-v3
network pretrained on Imagenet. Since the pretrained Inception network is not
suitable for classifying handwritten digits, we follow an idea due to
\citet{li2017alice} and pretrain a CNN classifier on MNIST dataset with 99.1\%
test accuracy.

\subsection{Neural Network Architectures and Hyperparameters}

\subsubsection{Generator}

\paragraph{MLP:}
$$\textsl{Input} (100) \xrightarrow{FC} \textsl{Hidden}(512) \xrightarrow{FC} \textsl{Hidden}(512)\xrightarrow{FC} \textsl{Hidden}(512)\xrightarrow{FC} \textsl{Output}(D). $$

\paragraph{DCGAN MNIST $28\times28$:}
\begin{align*}
&\textsl{Input} (100, 1, 1) \xrightarrow{\textsl{ConvT}(7,1,0)+\textsl{BN}} \textsl{Hidden}(128, 7, 7) \xrightarrow{\textsl{ConvT}(4,2,1)+\textsl{BN}} \textsl{Hidden}(64, 14, 14)\\
&\xrightarrow{\textsl{ConvT}(4,2,1)} \textsl{Output}(1, 28, 28).
\end{align*}

\paragraph{DCGAN Thin-8 $128\times128$:}
\begin{align*}
&\textsl{Input} (100, 1, 1) \xrightarrow{\textsl{ConvT}(4,1,0)+\textsl{BN}} \textsl{Hidden}(256, 4, 4) \xrightarrow{\textsl{ConvT}(4,2,1)+\textsl{BN}}\textsl{Hidden}(128, 8, 8)\\
&\xrightarrow{\textsl{ConvT}(4,2,1)+\textsl{BN}}\textsl{Hidden}(64, 16, 16)
\xrightarrow{\textsl{ConvT}(4,2,1)+\textsl{BN}} \textsl{Hidden}(32, 32, 32)\xrightarrow{\textsl{ConvT}(4,2,1)} \textsl{Hidden}(16, 64, 64)\\
&\xrightarrow{\textsl{ConvT}}\textsl{Output}(1, 128, 128).
\end{align*}

\subsubsection{Discriminator/Encoder}

\paragraph{DCGAN Discriminator $28 \times 28$:}
\begin{align*}
\textsl{Input} (1, 28, 28) \xrightarrow{Conv(4,2,1)} \textsl{Hidden}(64, 14, 14) \xrightarrow{\textsl{ConvT}(4,2,1)+\textsl{BN}}\textsl{Hidden}(128, 7, 7)
\xrightarrow{FC} \textsl{Output}(1).
\end{align*}

\paragraph{Thin-8 Discriminator $128 \times 128$:}
\begin{align*}
&\textsl{Input} (1, 128, 128) \xrightarrow{Conv(4,2,1)} \textsl{Hidden}(16, 64, 64) \xrightarrow{\textsl{ConvT}(4,2,1)+\textsl{BN}}\textsl{Hidden}(32, 32, 32)\\&
\xrightarrow{\textsl{ConvT}(4,2,1)+\textsl{BN}}\textsl{Hidden}(64, 16, 16)
\xrightarrow{\textsl{ConvT}(4,2,1)+\textsl{BN}}\textsl{Hidden}(128, 8, 8)
\xrightarrow{\textsl{ConvT}(4,2,1)+\textsl{BN}}\textsl{Hidden}(256, 4, 4)\\&
\xrightarrow{FC} \textsl{Output}(1).
\end{align*}

Encoder architectures are the same as discriminator architectures except for the output layers.

For WGANGP, batch normalization is not used since it affects computation of gradients. All activations used are ReLU. Learning rate is $10^{-4}$ for WGAN, WGANGP and our method, and $10^{-3}$ for VAE and WAE.

\subsection{Training Details of Our Method}

To get the transport mapping $T$ in OTS, we memorize the sampled batches and their transportation targets, and reuse these batches in FIT. By this trick we avoid recomputing the maximum over the whole dataset.

Our empirical stopping criterion relies upon keeping a histogram of transportation targets in memory:
if the histogram of targets is close to a uniform distribution (which is the distribution of training dataset), we stop OTS.
This stopping criterion is grounded by our analysis in \Cref{sec:optimization}.
 \end{appendices}

\end{document}